\documentclass[letterpaper]{article} 
\usepackage{aaai2026}  
\usepackage{times}  
\usepackage{helvet}  
\usepackage{courier}  
\usepackage[hyphens]{url}  
\usepackage{graphicx} 
\urlstyle{rm} 
\usepackage{natbib}  
\usepackage{caption} 
\frenchspacing  
\setlength{\pdfpagewidth}{8.5in} 
\setlength{\pdfpageheight}{11in} 
%
\usepackage{algorithm}
\usepackage{algorithmic}

%
\usepackage{newfloat}
\usepackage{listings}
\DeclareCaptionStyle{ruled}{labelfont=normalfont,labelsep=colon,strut=off} 
\lstset{%
	basicstyle={\footnotesize\ttfamily},
	numbers=left,numberstyle=\footnotesize,xleftmargin=2em,
	aboveskip=0pt,belowskip=0pt,%
	showstringspaces=false,tabsize=2,breaklines=true}
\floatstyle{ruled}
\newfloat{listing}{tb}{lst}{}
\floatname{listing}{Listing}
%
\pdfinfo{
/TemplateVersion (2026.1)
}

\usepackage{amsmath}
\usepackage{amssymb}
\usepackage{amsthm}

\newtheorem{theorem}{Theorem}
\newtheorem{lemma}{Lemma}
\newtheorem{corollary}{Corollary}
\newtheorem{assumption}{Assumption}

\usepackage{multirow}
\usepackage{booktabs}

\usepackage{makecell}

\setcounter{secnumdepth}{2} 

%


\title{T-SKM-Net: Trainable Neural Network Framework for Linear Constraint Satisfaction via Sampling Kaczmarz-Motzkin Method}
\author {
    Haoyu Zhu\textsuperscript{\rm 1},
    Yao Zhang\textsuperscript{\rm 1},
    Jiashen Ren\textsuperscript{\rm 1},
    Qingchun Hou\textsuperscript{\rm 1}\thanks{Corresponding author.}
}
\affiliations {
    \textsuperscript{\rm 1}Zhejiang University\\
    haoyu.22@intl.zju.edu.cn, eezhangyao@zju.edu.cn, jiashen.22@intl.zju.edu.cn, houqingchun@zju.edu.cn
}

\begin{document}

\maketitle

\begin{abstract}
Neural network constraint satisfaction is crucial for safety-critical applications such as power system optimization, robotic path planning, and autonomous driving. However, existing constraint satisfaction methods face efficiency-applicability trade-offs, with hard constraint methods suffering from either high computational complexity or restrictive assumptions on constraint structures. The Sampling Kaczmarz-Motzkin (SKM) method is a randomized iterative algorithm for solving large-scale linear inequality systems with favorable convergence properties, but its argmax operations introduce non-differentiability, posing challenges for neural network applications. This work proposes the Trainable Sampling Kaczmarz-Motzkin Network (T-SKM-Net) framework and, for the first time, systematically integrates SKM-type methods into neural network constraint satisfaction. The framework transforms mixed constraint problems into pure inequality problems through null space transformation, employs SKM for iterative solving, and maps solutions back to the original constraint space, efficiently handling both equality and inequality constraints. We provide theoretical proof of post-processing effectiveness in expectation and end-to-end trainability guarantees based on unbiased gradient estimators, demonstrating that despite non-differentiable operations, the framework supports standard backpropagation. On the DCOPF case118 benchmark, our method achieves 4.27ms/item GPU serial forward inference with 0.0025\% max optimality gap with post-processing mode and 5.25ms/item with 0.0008\% max optimality gap with joint training mode, delivering over 25$\times$ speedup compared to the pandapower solver while maintaining zero constraint violations under given tolerance.
\end{abstract}

\begin{links}
    \link{Code}{https://github.com/IDO-Lab/T-SKM-Net}
\end{links}

\section{Introduction}

Constrained decision problems arise across various disciplines, such as the DC Optimal Power Flow (DCOPF) problem in power systems~\cite{carpentier1962contribution}, which requires minimizing operational costs while satisfying physical and security constraints of the power system. However, directly solving these problems using optimization solvers requires substantial computational time. Therefore, in scenarios demanding rapid or even real-time responses, traditional solvers often fail to meet timing requirements~\cite{scutari2018parallel}, motivating researchers to explore more efficient approximate solution methods.

In recent years, deep learning~\cite{goodfellow2016deep} has demonstrated powerful function approximation capabilities across various complex tasks~\cite{hornikMultilayerFeedforwardNetworks1989,cybenkoApproximationSuperpositionsSigmoidal1989,lecunDeepLearning2015}, providing new insights for solving constrained optimization problems~\cite{smithNeuralNetworksCombinatorial1999,JMLR:v24:21-0449,houGeneralizeLearnedHeuristics2023,liu2024teaching}. Constrained decision problems can be transformed into single forward inference of neural networks by learning the mapping relationship from problem parameters to decision variables. However, neural network outputs often cannot guarantee strict satisfaction of the original constrained decision problem's constraints, which limits their use in safety-critical applications.

To address this challenge, researchers have proposed various methods for integrating constraints into neural networks. Existing methods can be mainly categorized into soft constraints and hard constraints. Soft constraint methods~\cite{raissiPhysicsinformedNeuralNetworks2019} indirectly handle constraints by incorporating constraint violation terms as penalty terms in the loss function. While these methods are simple to implement and maintain network differentiability, they cannot strictly guarantee constraint satisfaction, posing potential risks in safety-critical applications. Hard constraint methods attempt to strictly satisfy constraints at network output, including differentiable optimization layers~\cite{amosOptNetDifferentiableOptimization2017,dontiDC3LearningMethod2021,minHardNetHardConstrainedNeural2025}, parameterized feasible space methods~\cite{tordesillasRAYENImpositionHard2023,zhangEfficientLearningBasedSolver2024}, and decision rule approaches~\cite{constante-floresEnforcingHardLinear2025}. However, these methods still face challenges such as high computational complexity, requirement for pre-computing feasible points, or limited expressiveness when handling input-dependent dynamic linear constraints.

To address these limitations, this work proposes the Trainable Sampling Kaczmarz-Motzkin Network (T-SKM-Net) framework. The main contributions include: (1) First integration of the Sampling Kaczmarz-Motzkin method to neural network linear constraint satisfaction for input-dependent dynamic constraints; (2) Theoretical proof that this method can serve as an effective approximation of $L_2$ projection, providing theoretical foundation for its use as a post-processing strategy; (3) Theoretical demonstration of end-to-end trainability of the framework through unbiased gradient estimators~\cite{robbins1951stochastic,bottou2010large,goda2023constructing}, addressing training challenges caused by non-differentiable argmax operations.

\section{Related Work} 

Neural network constraint satisfaction methods can be categorized into soft constraints and hard constraints based on constraint handling approaches, where hard constraint methods can be further subdivided into three main paradigms. 

\subsection{Soft Constraint Methods} 

Soft constraint strategies indirectly handle constraints by incorporating constraint violation terms as penalty terms in the loss function~\cite{raissiPhysicsinformedNeuralNetworks2019,doi:10.1137/18M1225409}. While these methods are simple to implement and preserve network differentiability, they cannot strictly guarantee constraint satisfaction, posing potential risks in safety-critical applications. 

\subsection{Hard Constraint Methods} 

\subsubsection{Differentiable Optimization Layers and Projection Methods} 

These methods ensure constraint satisfaction by embedding optimization problems into neural networks or employing projection operations~\cite{chenEnforcingPolicyFeasibility2021}. OptNet~\cite{amosOptNetDifferentiableOptimization2017} embeds quadratic programming layers~\cite{butlerEfficientDifferentiableQuadratic2023} into neural networks but suffers from high computational complexity. DC3~\cite{dontiDC3LearningMethod2021} adopts completion and correction strategies but cannot strictly guarantee equality constraint satisfaction due to linear approximations. HardNet~\cite{minHardNetHardConstrainedNeural2025} provides closed-form projection expressions but relies on restrictive assumptions such as constraint matrix full rank. GLinSAT~\cite{zengGLinSATGeneralLinear2024} solves an entropy-regularized linear program with an accelerated first-order method to impose general linear constraints, but still requires an inner iterative solver per batch. Recent works also design feasibility-seeking or projection-like layers with guarantees, such as homeomorphic projection for sets homeomorphic to a unit ball (including all compact convex sets and some nonconvex sets)~\cite{liang2023low}, feasibility-seeking NNs via unrolled violation minimization~\cite{nguyen2025fsnet}, and feasibility-restoration mappings for AC-OPF~\cite{hanFRMNetFeasibilityRestoration2024a}, but they introduce additional optimization or invertible-network components inside the layer.

\subsubsection{Parameterized Feasible Space Methods} 

These methods parameterize neural network outputs to feasible regions. \citeauthor{zhangEfficientLearningBasedSolver2024}~\shortcite{zhangEfficientLearningBasedSolver2024} uses gauge maps for polyhedra mapping but requires the origin to be a strict interior point, making equality constraint handling difficult. RAYEN~\cite{tordesillasRAYENImpositionHard2023} employs geometric transformations but requires offline computation of feasible points, limiting applicability to input-dependent constraints. 

\subsubsection{Decision Rule Methods} 

Recent work introduces decision rule-based methods from stochastic optimization. \cite{constante-floresEnforcingHardLinear2025} et al.\ propose a framework combining task and safety networks through convex combinations. However, this approach has limited expressiveness and requires convex constraint sets. Preventive learning~\cite{zhao2023ensuring} calibrates linear inequality constraints during training to anticipate DNN prediction errors and ensure feasibility without post-processing, but is tailored to convex linear constraints. 

\subsection{Sampling Kaczmarz-Motzkin Method} 

The Kaczmarz method~\cite{karczmarz1937angenaherte} and Motzkin relaxation method~\cite{motzkin1954relaxation} are classical iterative techniques for solving linear systems and inequalities, respectively. \citeauthor{strohmerRandomizedKaczmarzAlgorithm2009}~\shortcite{strohmerRandomizedKaczmarzAlgorithm2009} proved exponential convergence for the randomized Kaczmarz algorithm. \citeauthor{loeraSamplingKaczmarzMotzkinAlgorithm2017}~\shortcite{loeraSamplingKaczmarzMotzkinAlgorithm2017} unified these approaches in the Sampling Kaczmarz-Motzkin (SKM) method for large-scale linear inequality systems. \citeauthor{morshedSamplingKaczmarzMotzkin2022}~\shortcite{morshedSamplingKaczmarzMotzkin2022} further improved SKM with global linear convergence guarantees. Despite favorable theoretical properties, SKM-type methods have not been integrated to neural network constraint satisfaction problems. 

\subsection{Positioning of Our Contributions} 

Existing methods have various limitations when handling input-dependent dynamic linear constraints: soft constraints cannot strictly guarantee constraint satisfaction; differentiable optimization layers and feasibility-seeking architectures (e.g., GLinSAT~\cite{zengGLinSATGeneralLinear2024}, homeomorphic projection~\cite{liang2023low}, FSNet~\cite{nguyen2025fsnet}, FRMNet~\cite{hanFRMNetFeasibilityRestoration2024a}, DC3~\cite{dontiDC3LearningMethod2021}) often require solving auxiliary optimization problems or running invertible-network subroutines in each forward pass; parameterized feasible space methods require prior knowledge or offline computation of feasible points, making them difficult to handle dynamic constraints; decision rule methods have limited expressiveness and typically require convex constraint sets.

The T-SKM-Net framework proposed in this work first introduces SKM methods to the neural network constraint satisfaction domain, leveraging their computational efficiency and theoretical convergence guarantees. Unlike existing methods, the T-SKM-Net framework provides two flexible usage modes:

\begin{enumerate}
\item \textbf{Post-processing Method}: Used solely as a post-processing method without incorporation into training steps. It can be directly applied to any pre-trained neural network to ensure satisfaction of linear constraints.
\item \textbf{Joint Training Mode}: Integrates the constraint satisfaction layer into the neural network framework, achieving end-to-end joint optimization that optimizes prediction performance while satisfying constraints.
\end{enumerate}

\section{Preliminaries}

\subsection{Problem Formulation}

Consider the mixed linear constraint system:
\begin{align}
A(x)z &\leq b(x) \\
C(x)z &= d(x)
\end{align}
where $x \in \mathbb{R}^{n_{\text{in}}}$ is the input, $z \in \mathbb{R}^{n}$ is the output variable, $A(x) \in \mathbb{R}^{p\times n}$, $C(x) \in \mathbb{R}^{q\times n}$ are constraint matrices, and $b(x) \in \mathbb{R}^p$, $d(x) \in \mathbb{R}^q$ are right-hand side vectors. The feasible region is:
\begin{align}
\mathcal{F}(x) = \{z \in \mathbb{R}^n: A(x)z\leq b(x), C(x)z=d(x) \}
\end{align}

In neural network constraint satisfaction, an upstream network $f_{\theta}: \mathbb{R}^{n_{\text{in}}} \to \mathbb{R}^n$ produces output $y_{0}=f_{\theta}(x)$ that typically violates constraints ($y_{0} \notin \mathcal{F}(x)$). A constraint satisfaction layer transforms $y_{0}$ into a feasible solution $z^* \in \mathcal{F}(x)$. The layer must satisfy: (1) constraint satisfaction, (2) solution quality, (3) computational efficiency, and (4) end-to-end trainability.

\subsection{Sampling Kaczmarz-Motzkin Method}

The SKM algorithm solves linear inequality systems $Az\leq b$ through the following iteration:
\begin{align}
z_{k+1} = z_{k} - \delta \frac{\left(a_{i^*}^\top z_{k}-b_{i^*}\right)_{+}}{\|a_{i^*}\|^2} a_{i^*}
\end{align}

where $\delta>0$ is the step size, $i^*=\arg\max_{i\in S_k}\left(a_{i}^\top z_{k}-b_{i}\right)_{+}$ is the most violated constraint in the sampled set $S_{k} \subseteq \{1,\ldots,p\}$ with $|S_{k}| = \beta$, $(\cdot)_{+}=\max(\cdot,0)$, and $a_{i}$ is the $i$-th row of matrix $A$.

\section{T-SKM-Net Framework}

\subsection{Framework Overview}

T-SKM-Net (Trainable Sampling Kaczmarz-Motzkin Network) addresses the fundamental challenge of efficiently handling mixed linear constraint systems in neural networks. While SKM methods excel at solving pure inequality systems, directly applying them to mixed constraints faces significant geometric challenges: equality constraints define hyperplanes (zero-measure sets) while inequality constraints define half-spaces, creating a fundamental mismatch that leads to inefficient oscillations between constraint types during iterative processing.

\begin{figure}[ht]
\centering
\includegraphics[width=\columnwidth]{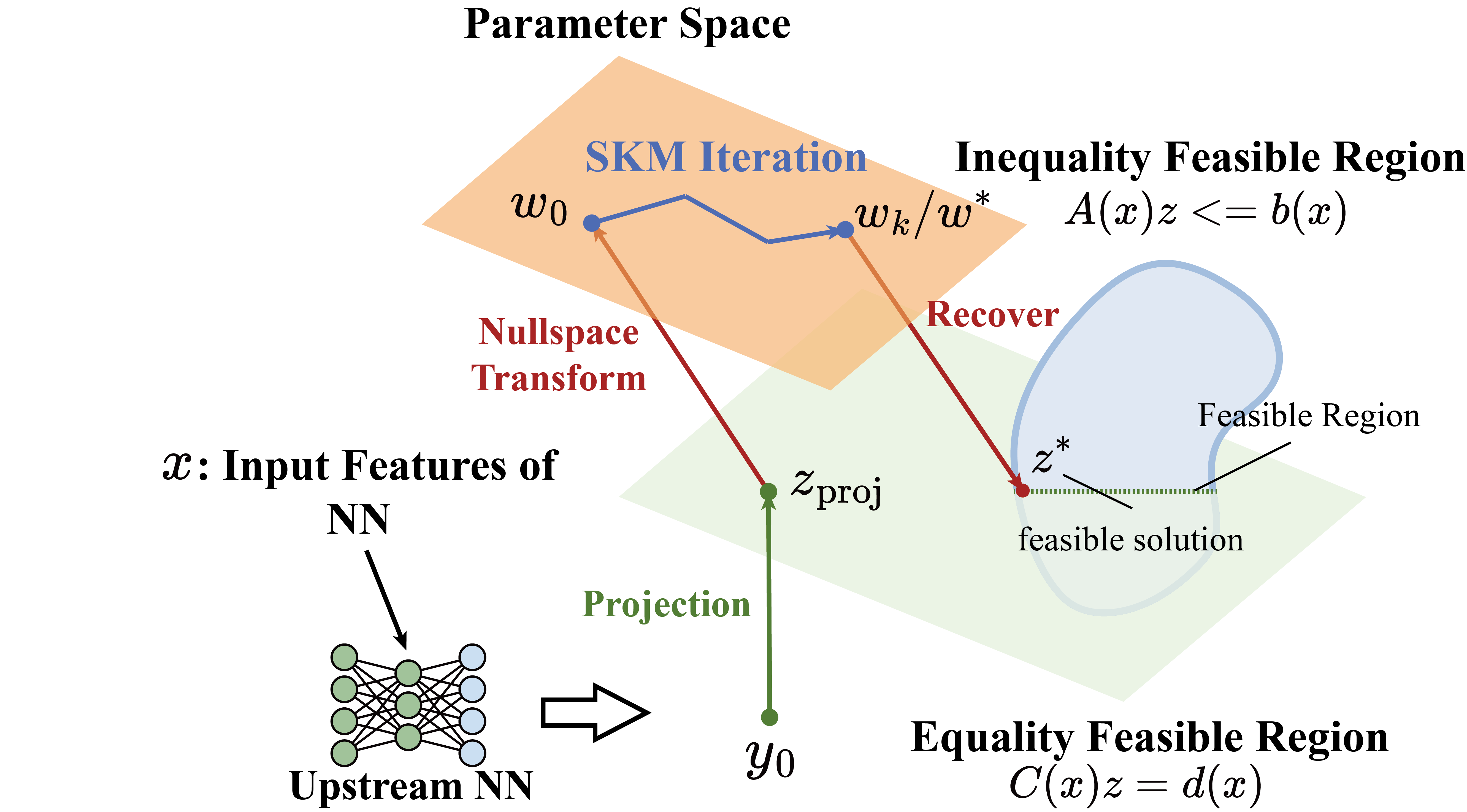}
\caption{T-SKM-Net framework architecture.}
\label{fig:framework}
\end{figure}

To address this challenge, T-SKM-Net transforms mixed constraint problems into pure inequality problems before applying SKM iterations. The framework accepts upstream neural network output $y_{0} \in \mathbb{R}^n$ and input $x \in \mathbb{R}^{n_{\text{in}}}$, producing output $z^* \in \mathcal{F}(x)$ that strictly satisfies all constraints through three key steps (Figure~\ref{fig:framework}):

(1) \textbf{Constraint Transformation}: Convert mixed constraints into pure inequalities through SVD-based null space transformation~\cite{golub2013matrix}, which addresses the geometric mismatch by decomposing the problem into equality-satisfying subspace and inequality optimization; (2) \textbf{SKM Iteration}: Apply randomized constraint selection and projection in the transformed space, where iterations effectively move within the equality constraint manifold while seeking inequality constraint satisfaction; (3) \textbf{Solution Recovery}: Map the solution back to the original constraint space while preserving both equality and inequality constraint satisfaction.


\subsection{Algorithm Pipeline}

\begin{algorithm}[h]
\caption{T-SKM-Net Constraint Satisfaction}
\label{alg:tskm}
\begin{algorithmic}[1]
\REQUIRE Upstream output $y_0$, constraints\\ $A(x), b(x), C(x), d(x)$
\ENSURE Feasible solution $z^*$

\STATE \textbf{Constraint Transformation:}
\STATE Compute SVD: $C = U\Sigma V^T$, obtain $N$ (null space basis)
\STATE $z_{\text{proj}} = y_0 - C^{\dagger}(Cy_0 - d)$
\STATE $A_{\text{new}} = AN$, $b_{\text{new}} = b - Az_{\text{proj}}$

\STATE \textbf{SKM Iteration:}
\STATE Initialize $w_0 = 0$
\FOR{$k = 0, 1, \ldots, K-1$}
    \STATE Sample constraint set $S_k \subseteq \{1,\ldots,p\}$ with $|S_k| = \beta$
    \STATE $i^* = \arg\max_{i \in S_k} (a_i^T w_k - b_i^{\text{new}})_+$
    \STATE $w_{k+1} = w_k - \delta \frac{(a_{i^*}^T w_k - b_{i^*}^{\text{new}})_+}{\|a_{i^*}\|^2} a_{i^*}$
    \IF{termination condition met}
        \STATE \textbf{break}
    \ENDIF
\ENDFOR

\STATE \textbf{Solution Recovery:}
\STATE $z^* = z_{\text{proj}} + Nw_K$
\RETURN $z^*$
\end{algorithmic}
\end{algorithm}

The algorithm ensures constraint satisfaction: equality constraints are satisfied by construction since $Nw_K \in \text{null}(C)$, while inequality constraints are satisfied when SKM converges.

\subsection{Theoretical Guarantees}

We provide three theoretical results establishing the effectiveness and trainability of T-SKM-Net:

\begin{theorem}[SKM L2 Projection Approximation]
\label{thm:skm-l2}
For inequality constraint $Az\leq b$ with feasible region $\mathcal{P}$, the SKM method starting from $z_0$ satisfies:
\begin{align}
\mathbb{E}[d(z_{k},z_{0})] \leq 2 \cdot d(z_{0},\mathcal{P})
\end{align}

where $d(z_0, \mathcal{P})$ is the distance from initial point to feasible region.
\end{theorem}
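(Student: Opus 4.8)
The plan is to exploit the Fej\'er monotonicity of the Kaczmarz-type update with respect to the Euclidean projection of $z_0$ onto the feasible set, and then close the argument with one triangle inequality. Since $\mathcal{P}=\{z:Az\le b\}$ is a nonempty closed convex polyhedron, the projection $\bar z=P_{\mathcal{P}}(z_0)$ exists, is unique, and satisfies $\|z_0-\bar z\|=d(z_0,\mathcal{P})$. I would first prove that every SKM iterate stays at least as close to $\bar z$ as $z_0$ does, namely $\|z_k-\bar z\|\le\|z_0-\bar z\|$, and then bound the point-to-point distance $d(z_k,z_0)=\|z_k-z_0\|$ through the two legs $\|z_k-\bar z\|$ and $\|\bar z-z_0\|$, each of which is at most $d(z_0,\mathcal{P})$.

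The core computation is the per-step contraction. Writing $c_k=(a_{i^*}^\top z_k-b_{i^*})_+/\|a_{i^*}\|^2$, the update is $z_{k+1}=z_k-\delta c_k a_{i^*}$, and expanding the norm gives
\begin{align}
\|z_{k+1}-\bar z\|^2 = \|z_k-\bar z\|^2 - 2\delta c_k\, a_{i^*}^\top(z_k-\bar z) + \delta^2 c_k^2\|a_{i^*}\|^2 .
\end{align}
If $c_k=0$ the iterate is unchanged and the claim is trivial; otherwise the selected constraint is violated, so $(a_{i^*}^\top z_k-b_{i^*})_+=a_{i^*}^\top z_k-b_{i^*}$, and feasibility of $\bar z$ gives $a_{i^*}^\top\bar z-b_{i^*}\le 0$. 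Hence
\begin{align}
a_{i^*}^\top(z_k-\bar z) = (a_{i^*}^\top z_k-b_{i^*}) - (a_{i^*}^\top\bar z-b_{i^*}) \ge (a_{i^*}^\top z_k-b_{i^*})_+ = c_k\|a_{i^*}\|^2 .
\end{align}
Substituting this lower bound into the cross term yields $\|z_{k+1}-\bar z\|^2\le\|z_k-\bar z\|^2-\delta(2-\delta)c_k^2\|a_{i^*}\|^2$, so for any step size $\delta\in(0,2)$ the correction is nonnegative and $\|z_{k+1}-\bar z\|\le\|z_k-\bar z\|$, regardless of which index the argmax returns in $S_k$.

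Chaining this inequality over the iterations gives $\|z_k-\bar z\|\le\|z_0-\bar z\|=d(z_0,\mathcal{P})$, and the triangle inequality then produces
\begin{align}
\|z_k-z_0\| \le \|z_k-\bar z\| + \|\bar z-z_0\| \le 2\,d(z_0,\mathcal{P}) .
\end{align}
Because this holds pathwise for every realization of the random sample $S_k$, taking expectations is immediate and delivers $\mathbb{E}[d(z_k,z_0)]\le 2\,d(z_0,\mathcal{P})$.

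The only genuine obstacle is the per-step contraction, and it hinges on two ingredients: the feasibility of the projection $\bar z$, which fixes the sign of $a_{i^*}^\top\bar z-b_{i^*}$ in our favor, and the step-size window $\delta\in(0,2)$, outside of which $\delta(2-\delta)$ can turn negative and monotonicity can fail. I would also stress that the estimate is completely insensitive to the sampling distribution and to the argmax tie-breaking inside $S_k$, since the contraction holds for \emph{every} admissible choice of $i^*$; this is exactly why the non-differentiable argmax is harmless here and why the bound is in fact a sure bound, with the expectation being superfluous.
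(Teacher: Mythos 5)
Your proof is correct and follows essentially the same route as the paper's: Fej\'er monotonicity of the iterates with respect to the projection $P_{\mathcal{P}}(z_0)$ (via the same norm expansion, feasibility of the projection point, and the step-size window $0<\delta<2$), followed by the triangle inequality. Your closing observation that the bound holds pathwise for every sampling realization --- making the expectation superfluous --- is a slight sharpening of the paper's argument, which instead takes expectations and invokes Jensen's inequality, but the substance is identical.
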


\textbf{Proof Sketch:} We construct an auxiliary function $V(z) = \|z - P(z_0)\|^2$ where $P(z_0)$ is the L2 projection onto $\mathcal{P}$. The key insight is that SKM updates exhibit Fejér monotonicity~\cite{combettes2008fejer}: $V(z_{k+1}) \leq V(z_k)$ for $0 < \delta < 2$. Applying Jensen's inequality and the triangle inequality yields the bound. \textit{Complete proof in appendix.}

\begin{theorem}[T-SKM-Net L2 Projection Approximation]
\label{thm:tskm-l2}
For mixed constraint system with feasible region $\mathcal{F}(x)$ and null space basis matrix $N$, T-SKM-Net satisfies:
\begin{align}
\mathbb{E}[d(z_k, y_0)] \leq \sqrt{1+4\kappa(N)^2} \cdot d(y_0, P(y_0))
\end{align}
where $\kappa(N)$ is the condition number of $N$ and $P(y_0)$ is the L2 projection. For SVD-based $N$, $\kappa(N)=1$, yielding $\mathbb{E}[d(z_k, y_0)] \leq \sqrt{5} \cdot d(y_0, P(y_0))$.
\end{theorem}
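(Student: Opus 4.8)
The plan is to exploit the orthogonal splitting between the row space and null space of $C$ that is built into the transformation, and to reduce the claim to Theorem~\ref{thm:skm-l2} applied in the reduced $w$-coordinates. First I would identify $z_{\mathrm{proj}} = y_0 - C^{\dagger}(Cy_0-d)$ as the exact orthogonal projection of $y_0$ onto the equality manifold $\mathcal{E} = \{z : Cz = d\}$, so that the displacement $y_0 - z_{\mathrm{proj}} = C^{\dagger}(Cy_0-d)$ lies in $\mathrm{row}(C) = \mathrm{null}(C)^{\perp}$, whereas every SKM increment $Nw_k$ lies in $\mathrm{range}(N) = \mathrm{null}(C)$. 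Writing $z_k = z_{\mathrm{proj}} + Nw_k$, these two components are orthogonal, which gives the Pythagorean identity $\|z_k - y_0\|^2 = \|z_{\mathrm{proj}} - y_0\|^2 + \|Nw_k\|^2$. I would also note that $w \mapsto z_{\mathrm{proj}} + Nw$ is a bijection between the transformed feasible set $\mathcal{P}_w = \{w : A_{\mathrm{new}}w \le b_{\mathrm{new}}\}$ and $\mathcal{F}(x)$, since $A(z_{\mathrm{proj}}+Nw) \le b$ is equivalent to $ANw \le b - Az_{\mathrm{proj}}$.

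Next I would control the null-space excursion $\|Nw_k\|$. Applying Theorem~\ref{thm:skm-l2} in the $w$-coordinates with starting point $w_0 = 0$ bounds $\|w_k\|$ in terms of $d(0,\mathcal{P}_w)$; since the underlying F\'ejer monotonicity (established in the proof of Theorem~\ref{thm:skm-l2}) holds pathwise, I obtain $\|w_k\| \le 2\,d(0,\mathcal{P}_w)$ for every iterate. To translate $d(0,\mathcal{P}_w)$ back to the original geometry, let $z^{\dagger} = P(y_0) \in \mathcal{F}(x)$ with null-space coordinate $w^{\dagger}$, so that $z^{\dagger} = z_{\mathrm{proj}} + Nw^{\dagger}$ and $w^{\dagger} \in \mathcal{P}_w$. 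Then $d(0,\mathcal{P}_w) \le \|w^{\dagger}\| = \|N^{\dagger}(z^{\dagger}-z_{\mathrm{proj}})\| \le \sigma_{\min}(N)^{-1}\|z^{\dagger}-z_{\mathrm{proj}}\|$, and combining with $\|Nw_k\| \le \sigma_{\max}(N)\|w_k\|$ yields $\|Nw_k\| \le 2\kappa(N)\,\|z^{\dagger}-z_{\mathrm{proj}}\|$.

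Finally I would assemble the estimate using a second orthogonal decomposition of $z^{\dagger}-y_0$. Because $z^{\dagger},z_{\mathrm{proj}} \in \mathcal{E}$ while $y_0-z_{\mathrm{proj}} \perp \mathrm{null}(C)$, we have $\|z^{\dagger}-y_0\|^2 = \|z_{\mathrm{proj}}-y_0\|^2 + \|z^{\dagger}-z_{\mathrm{proj}}\|^2 = d(y_0,P(y_0))^2$. Substituting the excursion bound into the Pythagorean identity gives $\|z_k-y_0\|^2 \le \|z_{\mathrm{proj}}-y_0\|^2 + 4\kappa(N)^2\|z^{\dagger}-z_{\mathrm{proj}}\|^2$; writing $a^2 = \|z_{\mathrm{proj}}-y_0\|^2$ and $c^2 = \|z^{\dagger}-z_{\mathrm{proj}}\|^2$ with $a^2+c^2 = d(y_0,P(y_0))^2$, the elementary inequality $a^2 + 4\kappa^2 c^2 \le (1+4\kappa^2)(a^2+c^2)$ (whose difference equals $4\kappa^2 a^2 + c^2 \ge 0$) yields $\|z_k-y_0\|^2 \le (1+4\kappa(N)^2)\,d(y_0,P(y_0))^2$. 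Taking square roots and then expectations gives the claim, and substituting $\sigma_{\max}(N)=\sigma_{\min}(N)=1$ for an SVD basis (so $\kappa(N)=1$) produces the $\sqrt{5}$ constant.

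The main obstacle I anticipate is obtaining the Pythagorean constant $\sqrt{1+4\kappa(N)^2}$ rather than the looser $1+2\kappa(N)$ that a naive triangle-inequality split of $z_k - y_0$ would give: this hinges on recognizing that the equality-projection residual and the null-space SKM excursion are \emph{exactly} orthogonal, so their squared norms add. A secondary technical point is a moment mismatch, since Theorem~\ref{thm:skm-l2} is phrased in expectation while the orthogonal decomposition combines most cleanly at the level of squared norms; I resolve this by invoking the pathwise bound $\|w_k\| \le 2\,d(0,\mathcal{P}_w)$ implicit in the F\'ejer monotonicity, after which the whole estimate holds almost surely and hence in expectation. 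The remaining work is purely condition-number bookkeeping, namely tracking $\sigma_{\max}(N)$ and $\sigma_{\min}(N)^{-1}$ through the coordinate change $z = z_{\mathrm{proj}} + Nw$.
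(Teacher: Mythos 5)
Your proposal is correct, and its skeleton coincides with the paper's: identify $z_{\mathrm{proj}}$ as the orthogonal projection onto the equality manifold, note $e = y_0 - z_{\mathrm{proj}} \in \mathrm{row}(C)$ is orthogonal to every $Nw \in \mathrm{null}(C)$, apply Theorem~\ref{thm:skm-l2} in the $w$-coordinates starting from $w_0=0$, pass from $\|w_{\mathrm{proj}}\| \le \|w^\dagger\|$ to $\sigma_{\min}(N)^{-1}\|Nw^\dagger\|$, and finish with condition-number bookkeeping. Where you genuinely diverge is in the final assembly, and your route is the cleaner of the two. The paper works at the level of first moments: it bounds $\mathbb{E}\|e + Nw_k\| \le \|e\| + \mathbb{E}\|Nw_k\|$ by the triangle inequality (not using the orthogonality it just established for that step), and only then recovers the constant $\sqrt{1+4\kappa(N)^2}$ via Cauchy--Schwarz applied to $\|e\| + 2\kappa(N)\|Nw^*\|$. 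You instead exploit the exact Pythagorean identity $\|z_k - y_0\|^2 = \|e\|^2 + \|Nw_k\|^2$, which forces you to control $\|Nw_k\|$ pathwise rather than in expectation --- and you correctly observe that the Fej\'er monotonicity in the proof of Theorem~\ref{thm:skm-l2} is a deterministic, per-realization inequality, so $\|w_k\| \le 2\,d(0,\mathcal{P}_w)$ holds almost surely, not merely in mean. This sidesteps the moment mismatch entirely and buys you something the paper's argument does not state: the bound $\|z_k - y_0\| \le \sqrt{1+4\kappa(N)^2}\,d(y_0,P(y_0))$ holds almost surely, which is strictly stronger than the expectation bound claimed in the theorem. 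Indeed, your intermediate inequality $\|e\|^2 + 4\kappa^2\|Nw^\dagger\|^2 \le \max\{1,4\kappa^2\}(\|e\|^2+\|Nw^\dagger\|^2)$ would even give the tighter constant $2\kappa(N)$ (since $\kappa \ge 1$); you deliberately relax to $\sqrt{1+4\kappa^2}$ to match the statement, which is fine. Both proofs are valid; yours trades the paper's expectation-level bookkeeping for a sharper pointwise argument at no extra cost.
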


\textbf{Proof Sketch:} The framework decomposes the original problem through null space transformation: $z = z_{\text{proj}} + Nw$. We exploit orthogonality between the projection error $e = y_0 - z_{\text{proj}}$ and the null space component $Nw$. Combined with Theorem~\ref{thm:skm-l2} applied to the transformed subproblem and matrix singular value bounds, the Cauchy-Schwarz inequality yields the final bound where the condition number $\kappa(N)$ controls the approximation quality. \textit{Complete proof in appendix.}

\begin{assumption}[Regularity Conditions]
\label{assump:regularity}
The parameterized SKM algorithm with constraints $A(x)w \leq b(x)$ satisfies:
\begin{enumerate}
\item \textbf{Non-degeneracy}: $\|a_i(x)\| \geq c > 0$ for all $i,x$
\item \textbf{Differentiability}: $A(x), b(x)$ are differentiable\\ with $\|\nabla_x A(x)\| \leq L_A$, $\|\nabla_x b(x)\| \leq L_b$
\item \textbf{Boundedness}: $\|A(x)\|, \|b(x)\| \leq M$ for all relevant $x$
\item \textbf{Step size}: $0 < \delta < 2$
\item \textbf{Non-degeneracy of tie events}: Constraint violation differences do not vanish identically on open subsets of input space
\end{enumerate}
\end{assumption}

Let $W_K$ be a random function mapping $x$ to the result after $K$ steps of random SKM iterate with sampling path $\omega$.

\begin{theorem}[End-to-End Trainability]
\label{thm:trainability}
Under Assumption~\ref{assump:regularity}, the SKM algorithm supports end-to-end training with: (i) well-defined expected gradients $\nabla_x \mathbb{E}_\omega[W_K(x,\omega)]$, (ii) finite gradient variance $\text{Var}(\nabla_x W_K(x,\omega)) < \infty$, and (iii) unbiased gradient estimation $\mathbb{E}_\omega[\nabla_x W_K(x,\omega)] = \nabla_x \mathbb{E}_\omega[W_K(x,\omega)]$.
\end{theorem}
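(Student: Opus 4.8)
The plan is to exploit the fact that the sampling path $\omega=(S_0,\dots,S_{K-1})$ ranges over a \emph{finite} set $\Omega$: each $S_k$ is one of $\binom{p}{\beta}$ index subsets and there are $K$ steps, so $\mathbb{E}_\omega[\cdot]=\sum_{\omega\in\Omega}P(\omega)[\cdot]$ is a finite weighted sum. This finiteness is what collapses the gradient--expectation exchange in (i) and (iii) to mere linearity of differentiation on a full-measure set, so the real work lies in (a) almost-everywhere differentiability of each path map $x\mapsto W_K(x,\omega)$ and (b) a uniform gradient bound, which together also deliver (ii).

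First I would fix $\omega$ and unfold the iteration as a composition of $K$ elementary maps. Each step $w_{k+1}=g_k(w_k,x)$ is, away from argmax ties and from the kink of $(\cdot)_+$, a smooth function of $(w_k,x)$: the rows $a_{i^*}(x),b_{i^*}(x)$ are differentiable by Assumption~\ref{assump:regularity}(2), the normalization $1/\|a_{i^*}(x)\|^2$ is smooth and bounded by the non-degeneracy $\|a_i\|\ge c$ of Assumption~\ref{assump:regularity}(1), and on a region where the maximizer $i^*$ is fixed the argmax is locally constant. Hence $x\mapsto W_K(x,\omega)$ is $C^1$ off an exceptional set $\mathcal{N}_\omega$ formed by the argmax-tie loci $\{a_i^\top w_k-b_i=a_j^\top w_k-b_j\}$ and the violation-sign loci $\{a_{i^*}^\top w_k-b_{i^*}=0\}$ accumulated over the $K$ steps.

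Next I would show $\mathcal{N}_\omega$ is Lebesgue-null. The tie and kink loci are zero sets of violation-difference and violation functions that depend smoothly on $x$ through the already-computed iterate $w_k(x)$, and Assumption~\ref{assump:regularity}(5) is precisely the hypothesis ruling out that any such function vanishes on a positive-measure set, so each locus is a measure-zero $C^1$ hypersurface; as $\Omega$ is finite, $\mathcal{N}=\bigcup_{\omega\in\Omega}\mathcal{N}_\omega$ remains null and all path maps are simultaneously $C^1$ on its complement. This is the step I expect to be the main obstacle: from the bare differentiability of Assumption~\ref{assump:regularity}(2) one must upgrade ``not identically zero on open sets'' to ``measure zero,'' which calls for a transversality (regular-value) argument giving the violation-difference a non-vanishing gradient on its zero set, and because each locus depends on the previously computed $w_k$ the nullity must be propagated inductively over $k=0,\dots,K-1$.

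Finally, on $\mathbb{R}^{n_{\text{in}}}\setminus\mathcal{N}$ I would differentiate the recursion by the chain rule, obtaining $J_{k+1}=M_k J_k+R_k$ with $M_k=I-\delta\,a_{i^*}a_{i^*}^\top/\|a_{i^*}\|^2$ and $R_k$ the explicit $x$-derivative. The eigenvalues of $M_k$ are $1$ and $1-\delta$, so $\|M_k\|\le 1$ whenever $0<\delta<2$ (Assumption~\ref{assump:regularity}(4)); the iterates remain in a bounded set by the Fej\'er monotonicity established for Theorem~\ref{thm:skm-l2}, whence $\|R_k\|\le R_{\max}$ for a constant depending only on $L_A,L_b,M,c,\delta$ and that iterate bound via Assumptions~\ref{assump:regularity}(1)--(3). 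With $J_0=0$ this gives the uniform estimate $\|\nabla_x W_K(x,\omega)\|\le K R_{\max}=:B$, independent of $\omega$ and of $x\notin\mathcal{N}$. Claim (ii) is then immediate since $\mathrm{Var}(\nabla_x W_K)\le\mathbb{E}_\omega[\|\nabla_x W_K\|^2]\le B^2<\infty$, and claims (i) and (iii) follow because on the null-complement the finite-sum identity $\nabla_x\sum_{\omega}P(\omega)W_K(x,\omega)=\sum_{\omega}P(\omega)\nabla_x W_K(x,\omega)$ is just linearity of the gradient, establishing both the well-definedness of $\nabla_x\mathbb{E}_\omega[W_K]$ and its equality with $\mathbb{E}_\omega[\nabla_x W_K]$.
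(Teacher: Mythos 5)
Your proposal is correct and follows essentially the same route as the paper's own proof: piecewise $C^1$ smoothness of $x\mapsto W_K(x,\omega)$ away from tie/kink loci, measure-zero exceptional sets via Assumption~\ref{assump:regularity}(5), the chain-rule recursion with $\|I-\delta\,a_{i^*}a_{i^*}^\top/\|a_{i^*}\|^2\|_2\le 1$ for $0<\delta<2$, iterate boundedness from Fej\'er monotonicity, the resulting gradient bound linear in $K$, and the $x$-independence of the sampling distribution for unbiasedness. Your two deviations are both improvements in rigor rather than a different approach: you observe that since $\Omega$ is finite the gradient--expectation exchange is mere linearity (the paper invokes dominated convergence, which is unnecessary here), and you explicitly flag that upgrading Assumption~\ref{assump:regularity}(5)'s ``not identically zero on open sets'' to ``zero set has Lebesgue measure zero'' needs a transversality/regular-value argument propagated inductively over $k$ --- a step the paper's Lemma on tie events asserts without proof.
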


\textbf{Proof Sketch:} The main challenge is handling non-differentiable $\arg\max$ operations in SKM. We establish that tie events (where multiple constraints achieve the maximum violation) have zero Lebesgue measure under Assumption~\ref{assump:regularity}(5). On the complement, SKM is piecewise differentiable with bounded Jacobians. The gradient bound $\|\nabla_x W_k\| \leq kC$ ensures dominated convergence, enabling interchange of expectation and differentiation. Since sampling probabilities are independent of $x$, gradients are unbiased. \textit{Complete proof in appendix.}

These results establish that T-SKM-Net produces high-quality solutions in expectation (Theorems~\ref{thm:skm-l2} and \ref{thm:tskm-l2}) while supporting standard backpropagation despite non-differentiable operations (Theorem~\ref{thm:trainability}).

\subsection{Usage Modes}

T-SKM-Net supports two flexible usage modes:

\textbf{Post-processing Mode}: Applied to pre-trained networks without modifying parameters. Given network $f_\theta$ and input $x$, compute $z^* = \mathcal{T}\text{-SKM}(f_\theta(x), x, \omega)$, ensuring constraint satisfaction while preserving network accuracy.

\textbf{End-to-End Training Mode}: Integrated into network architecture for joint optimization. The training objective becomes:
\begin{align}
\min_\theta \mathbb{E}[\mathcal{L}(\mathcal{T}\text{-SKM}(f_\theta(x), x, \omega), y_{\text{true}})]
\end{align}
Despite non-differentiable $\arg\max$ operations, Theorem~\ref{thm:trainability} guarantees unbiased gradient estimation by treating the sampling sequence $\omega$ as a deterministic path during backpropagation.

\subsection{Implementation and Strategy}

\subsubsection{Computational Optimization}

\textbf{Batch tensorization}: T-SKM-Net is implemented as a batched tensor operator, so all steps (null space transform, SKM iteration, recovery) can be easily parallelized on GPUs for large-scale problems.

\textbf{SVD precomputation}: In applications where the equality matrix $C$ is fixed (e.g., DCOPF with fixed topology), we precompute $C = U\Sigma V^\top$ once offline and reuse the null-space basis $N$, which removes this cost from the online path.

\subsubsection{Parameter Selection Guidelines}

\textbf{Step size and sampling}: In all experiments we set $\delta=1.0$ and observe a good trade-off between convergence speed and projection accuracy. For the sampling size $\beta$, we use small fixed values (e.g., 5–10) on small problems and scale it roughly like $\beta = \mathcal{O}(\sqrt{m})$ on large problems, where $m$ is the number of inequality constraints.

\subsubsection{Training Strategy Optimization}

\textbf{Violation-aware loss}: For joint training, we add simple penalty terms on the equality/inequality violations of $f_\theta(x)$ to the task loss, which empirically shortens SKM convergence and stabilizes training.

\textbf{Delayed activation}: We often freeze the T-SKM-Net layer in the early epochs and enable it only in the last 10\%–20\% of training, letting the backbone first learn a reasonable initialization before enforcing hard constraints.

\subsubsection{Algorithm Variant Support}

T-SKM-Net is compatible with SKM variants (e.g., momentum-accelerated updates).

\section{Experiments}

We evaluate T-SKM-Net on synthetic constraint satisfaction problems and real-world DC Optimal Power Flow (DCOPF) applications. All experiments are conducted on Intel Core i5-13500 CPU and AMD Radeon Instinct MI50 GPU (if not specified). We compare against state-of-the-art optimization solvers and neural constraint satisfaction methods, focusing on computational efficiency, solution quality, and constraint satisfaction.

\subsection{L2 Projection Efficiency}

We first evaluate T-SKM-Net as a post-processing method for L2 projection onto feasible regions. We construct random linear constraint systems with equal numbers of equality and inequality constraints (both $n/2$ where $n$ is the variable dimension), starting from infeasible points with initial maximum violations on the order of $10^2$.

We compare against three optimization approaches: CVXPY (using OSQP algorithm~\cite{stellato2020osqp}), OSQP library directly, and Gurobi commercial solver. Tables~\ref{tab:projection} and \ref{tab:projection_large} show results where "Speedup" indicates acceleration relative to the baseline method, "Max Vio." denotes maximum constraint violation, and "Rel. Err." measures the L2 distance to CVXPY's solution.

\begin{table}[t]
\centering
\setlength{\tabcolsep}{1mm}
\begin{tabular}{llcccc}
\toprule
\textbf{Dim} & \textbf{Method} & \textbf{Time (s)} & \textbf{Speedup} & \textbf{\makecell{Max\\Vio.}} & \textbf{\makecell{Rel.\\Err.}} \\
\midrule
\multirow{4}{*}{500} & CVXPY & 0.164 & 1.0$\times$ & 1.78e-14 & - \\
 & OSQP & 0.435 & 0.38$\times$ & 9.18e-06 & 0.74 \\
 & Gurobi & 0.073 & 2.24$\times$ & 1.69e-14 & 0.0 \\
 & T-SKM-Net & \textbf{0.017} & \textbf{9.65$\times$} & 4.30e-06 & 12.1 \\
\midrule
\multirow{4}{*}{2000} & CVXPY & 10.03 & 1.0$\times$ & 4.62e-14 & - \\
 & OSQP & 29.80 & 0.34$\times$ & 5.90e-05 & 0.36 \\
 & Gurobi & 1.658 & 6.05$\times$ & 3.20e-14 & 0.0 \\
 & T-SKM-Net & \textbf{0.237} & \textbf{42.3$\times$} & 1.19e-06 & 7.53 \\
\midrule
\multirow{4}{*}{3000} & CVXPY & 32.59 & 1.0$\times$ & 5.51e-14 & - \\
 & OSQP & 96.24 & 0.34$\times$ & 1.13e-04 & 0.44 \\
 & Gurobi & 4.217 & 7.73$\times$ & 6.04e-14 & 0.0 \\
 & T-SKM-Net & \textbf{0.856} & \textbf{38.1$\times$} & 6.20e-07 & 6.19 \\
\bottomrule
\end{tabular}
\caption{L2 Projection Performance (Medium Scale, Speedup vs CVXPY)}
\label{tab:projection}
\end{table}

\begin{table}[t]
\centering
\setlength{\tabcolsep}{1.5mm}
\begin{tabular}{llccc}
\toprule
\textbf{Dim} & \textbf{Method} & \textbf{Time (s)} & \textbf{Speedup} & \textbf{Max Vio.} \\
\midrule
\multirow{2}{*}{4000} & Gurobi & 9.184 & 1.0$\times$ & 7.82e-14 \\
 & T-SKM-Net & \textbf{2.001} & \textbf{4.59$\times$} & 8.71e-06 \\
\midrule
\multirow{2}{*}{8000} & Gurobi & 60.76 & 1.0$\times$ & 1.07e-13 \\
 & T-SKM-Net & \textbf{13.30} & \textbf{4.57$\times$} & 3.41e-07 \\
\midrule
\multirow{2}{*}{10000} & Gurobi & 103.6 & 1.0$\times$ & 1.72e-13 \\
 & T-SKM-Net & \textbf{22.82} & \textbf{4.54$\times$} & 1.31e-06 \\
\bottomrule
\end{tabular}
\caption{L2 Projection Performance (Large Scale, Speedup vs Gurobi)}
\label{tab:projection_large}
\end{table}

T-SKM-Net achieves significant speedups while maintaining approximation errors orders of magnitude smaller than initial violations. For large-scale problems where open-source solvers become impractical, T-SKM-Net consistently outperforms the commercial Gurobi solver with 4-5$\times$ speedups.

\subsection{Constrained Optimization Problems}

We evaluate T-SKM-Net on constrained quadratic programming problems to assess both post-processing effectiveness and end-to-end trainability. We consider the optimization problem~\cite{nocedal2006numerical}:
\begin{align}
\arg\min_y \quad & \frac{1}{2} y^T Q y + p^T y \\
\text{s.t.} \quad & Ay = x \\
& Gy \leq h
\end{align}
where $y \in \mathbb{R}^{100}$ is the decision variable, $x \in \mathbb{R}^{50}$ is the input parameter, and we vary the number of inequality constraints from 10 to 150.

We compare T-SKM-Net against optimization solvers (OSQP, qpth~\cite{amosOptNetDifferentiableOptimization2017}), differentiable optimization methods (OptNet), and neural constraint satisfaction methods (DC3, gradient-based post-processing). Neural networks are trained using gradient descent with constraint violation penalties in the loss function to approximate the optimization problem solutions. The neural network baseline uses gradient-based correction at test time to reduce constraint violations~\cite{dontiDC3LearningMethod2021}. We evaluate both end-to-end training and post-processing modes of T-SKM-Net.

\begin{table}[t]
\centering
\setlength{\tabcolsep}{1mm}
\begin{tabular}{llcccc}
\toprule
\textbf{Ineq.} & \textbf{Method} & \textbf{\makecell{Obj.\\Val.}} & \textbf{\makecell{Max\\Eq.$^{\dagger}$}} & \textbf{\makecell{Max\\Ineq.$^{\dagger}$}} & \textbf{Time (ms)} \\
\midrule
\multirow{7}{*}{30} & OSQP & -16.33 & 0.000 & 0.000 & \textbf{0.3} \\
 & qpth & -16.33 & 0.000 & 0.000 & 86.4 \\
 & OptNet & \textbf{-14.03} & 0.000 & 0.000 & 71.5 \\
 & DC3 & -\textbf{14.02} & 0.000 & 0.000 & 3.3 \\
 & NN, $\leq$ test & \textbf{-14.09} & 0.350$^{\dagger}$ & 0.000 & 2.5 \\
 & Ours (Train) & \textbf{-14.04} & 0.000 & 0.000 & 1.0 \\
 & Ours (Post) & \textbf{-14.05} & 0.000 & 0.000 & 1.0 \\
\midrule
\multirow{7}{*}{50} & OSQP & -15.05 & 0.000 & 0.001$^{\dagger}$ & \textbf{0.8} \\
 & qpth & -15.05 & 0.000 & 0.000 & 106.3 \\
 & OptNet & -12.46 & 0.000 & 0.000 & 101.0 \\
 & DC3 & \textbf{-13.44} & 0.000 & 0.000 & 5.7 \\
 & NN, $\leq$ test & -12.55 & 0.351$^{\dagger}$ & 0.000 & 2.5 \\
 & Ours (Train) & -12.53 & 0.000 & 0.000 & \textbf{0.9} \\
 & Ours (Post) & -12.52 & 0.000 & 0.000 & \textbf{0.9} \\
\midrule
\multirow{7}{*}{130} & OSQP & -12.73 & 0.000 & 0.000 & \textbf{1.1} \\
 & qpth & -12.73 & 0.000 & 0.000 & 492.8 \\
 & OptNet & \textbf{-10.25} & 0.000 & 0.000 & 450.6 \\
 & DC3 & -9.45 & 0.000 & 0.002$^{\dagger}$ & 109.0 \\
 & NN, $\leq$ test & \textbf{-10.32} & 0.352$^{\dagger}$ & 0.000 & 2.6 \\
 & Ours (Train) & \textbf{-10.23} & 0.000 & 0.000 & \textbf{1.0} \\
 & Ours (Post) & \textbf{-10.24} & 0.000 & 0.000 & 2.1 \\
\midrule
\multirow{7}{*}{150} & OSQP & -12.67 & 0.000 & 0.001$^{\dagger}$ & \textbf{1.2} \\
 & qpth & -12.67 & 0.000 & 0.000 & 652.1 \\
 & OptNet & \textbf{-10.22} & 0.000 & 0.000 & 560.0 \\
 & DC3 & -9.06 & 0.000 & 0.006$^{\dagger}$ & 111.2 \\
 & NN, $\leq$ test & \textbf{-10.29} & 0.351$^{\dagger}$ & 0.000 & 2.5 \\
 & Ours (Train) & \textbf{-10.24} & 0.000 & 0.000 & 1.8 \\
 & Ours (Post) & \textbf{-10.24} & 0.000 & 0.000 & \textbf{1.4} \\
\bottomrule
\end{tabular}
\caption{Constrained QP Performance (100 variables, 50 equality constraints). $^{\dagger}$ indicates constraint violations.}
\label{tab:qp}
\end{table}

Table~\ref{tab:qp} shows results as the number of inequality constraints ("Ineq.") varies from 30 to 150, where "Obj. Val." denotes the objective value, "Max Eq./Ineq." the maximum equality/inequality violations, and $^{\dagger}$ indicates constraint violations. 

The results reveal distinct performance characteristics across different approaches. Traditional optimization solvers (OSQP, qpth) provide exact solutions but with very different costs: qpth’s differentiable implementation incurs substantial overhead (86–652ms). OptNet, which uses qpth as its underlying differentiable optimizer, maintains strict constraint satisfaction, but its high cost makes it impractical for real-time applications. 

Among neural constraint satisfaction methods, T-SKM-Net consistently achieves zero constraint violations in both training and post-processing modes, while DC3 exhibits inequality violations in high-constraint scenarios. The neural network baseline with gradient-based correction, despite achieving competitive objective values, shows persistent equality violations, rendering it unsuitable for safety-critical applications. 

In terms of solution quality, T-SKM-Net is competitive across all constraint densities. For moderate problems (30–50 inequalities), our method attains objective values comparable to NN and OptNet while maintaining zero violations. In high-constraint scenarios (130–150 inequalities), T-SKM-Net matches OptNet’s solution quality while achieving over 200$\times$ speedup, and clearly outperforms DC3, which suffers from both constraint violations and deteriorated objectives. 

The computational efficiency analysis shows that T-SKM-Net achieves large speedups compared to differentiable optimization methods: up to 450$\times$ faster than OptNet and 109$\times$ faster than DC3’s reported times, while keeping runtime comparable to the highly optimized OSQP solver. Most importantly, unlike baseline neural networks that sacrifice constraint satisfaction for speed, T-SKM-Net enforces strict feasibility without compromising computational efficiency. These results demonstrate T-SKM-Net’s effectiveness in both usage modes: as a post-processing method, it transforms infeasible neural network outputs into high-quality feasible solutions; in end-to-end training, it enables joint optimization of prediction accuracy and constraint satisfaction, making it particularly suitable for real-time constrained optimization.

\subsection{DC Optimal Power Flow}

We evaluate T-SKM-Net on the IEEE 118-bus DC Optimal Power Flow (DCOPF) problem~\cite{houLinearizedModelActive2018,anderson2022real}, a critical real-world application in power system operations. This experiment is run on NVIDIA RTX 4090. DCOPF involves minimizing generation costs while satisfying power balance equations and transmission line capacity constraints.

This problem represents a mixed linear constraint system that is well-suited for evaluating our constraint satisfaction approach. The DCOPF problem can be formulated as:

\begin{align}
\min_{P_G,\;\theta}\quad 
& \sum_{g\in G}\Bigl(\tfrac12\,c_g\,P_{G,g}^2 + b_g\,P_{G,g}\Bigr) \\
\text{s.t.}\quad
& P_{G,i} - P_{D,i}
  = \sum_{j\in \mathcal{N}(i)} B_{ij}\,(\theta_i - \theta_j),
  \quad \forall\,i\in \mathcal{B}, \notag\\
& P_{G,i}^{\min} \le P_{G,i} \le P_{G,i}^{\max},
  \quad \forall\,i\in G, \notag\\
-&P_{ij}^{\max} \le B_{ij}\,(\theta_i - \theta_j) \le P_{ij}^{\max},
  \quad \forall\,(i,j)\in \mathcal{L}. \notag
\end{align}

where $G$, $\mathcal{B}$, and $\mathcal{L}$ denote the sets of generators, buses, and transmission lines, respectively, $B_{ij}$ are line susceptances, and the input load demands $P_D$ determine the optimal generation dispatch $P_G$ and voltage angles $\theta$.

We evaluate T-SKM-Net in post-processing mode, joint-training mode, and joint-training mode with MSE loss punishment to the upstream output to verify constraint satisfaction for neural network predictions and compare speed and solution quality under different setups.

We compare against the pandapower~\cite{thurnerPandapowerOpenSourcePython2018} optimization solver as the ground truth baseline, a neural network baseline without constraint satisfaction, and DC3~\cite{dontiDC3LearningMethod2021}. All experiments use single-instance processing (batch size = 1) to reflect real-time operational requirements. Our method under all 3 setups achieves zero constraint violations under $10^{-3}$ tolerance while significantly outperforming existing approaches in both speed and solution quality.
\begin{table}[ht]
\centering
\setlength{\tabcolsep}{1mm}
\begin{tabular}{lcccc}
\toprule
\textbf{Method} & \textbf{Time (ms)} & \textbf{\makecell{Opt. Gap\\(\%)}} & \textbf{\makecell{Max\\Eq.\\(e-4)}} & \textbf{\makecell{Max\\Ineq.\\(e-4)}} \\
\midrule
pandapower & 141.08 & 0.00 (0.00) & 0 & 0 \\
NN & \textbf{0.29} & 4e-4 (3.5e-2) & 2997$^{\dagger}$ & 6$^{\dagger}$ \\
DC3 & 52.15 & 5.6e-5 (2.6e-3) & 0 & 4$^{\dagger}$ \\
Ours (Post) & 4.27 & 1.0e-4 (2.5e-3) & \textbf{0} & \textbf{0} \\
Ours (Train) & 5.64 & 7.0e-5 (2.0e-3) & \textbf{0} & \textbf{0} \\
Ours (Train-p) & 5.25 & \textbf{5.4e-5 (8.0e-4)} & \textbf{0} & \textbf{0} \\
\bottomrule
\end{tabular}
\caption{DCOPF Performance on IEEE 118-bus System. $^{\dagger}$ indicates constraint violations.}
\label{tab:dcopf}
\end{table}

Table~\ref{tab:dcopf} shows that T-SKM-Net achieves remarkable computational efficiency, delivering over 25$\times$ speedup compared to the traditional pandapower solver and about 10$\times$ speedup compared to DC3, while maintaining superior solution quality. The order-of-magnitude difference in computational time ($\sim$5ms vs 52.15ms) demonstrates the significant efficiency advantage of our approach. The neural network baseline, while fastest, exhibits obvious constraint violations, making it unsuitable for practical deployment. DC3 achieves equality constraint satisfaction but violates inequality constraints. Most importantly, our method under all 3 setups ensures strict constraint satisfaction with zero violations while maintaining competitive computational efficiency.

The results demonstrate T-SKM-Net's practical viability for real-time power system applications, where both computational speed and constraint satisfaction are critical requirements. The sub-10ms inference time enables rapid decision-making in dynamic grid operations, while zero constraint violations ensure system safety and operational feasibility.

\subsection{Predictive Portfolio Allocation}

To further evaluate the generality of T-SKM-Net beyond synthetic and power system benchmarks, we consider the predictive portfolio allocation task which maximizes the Sharpe ratio~\cite{sharpeSharpeRatioFall1998}. Denoting $x_i$ as the predicted portfolio decision variable of asset $i$, the portfolio is required to satisfy linear constraints
\[
\sum_{i=1}^{n} x_i = 1,\quad \sum_{i\in S} x_i \ge q,\quad 0 \le x_i \le 1,
\]
where $S$ is a set of preferred large-cap technology stocks (e.g., AAPL, MSFT, AMZN, TSLA, GOOGL) and $q=0.5$, as in~\cite{zengGLinSATGeneralLinear2024}. Performance is measured by the average Sharpe ratio over the test horizon.

We compare six methods that share the same network backbone and differ only in the constraint satisfaction layer and training strategy: NN (unconstrained network without any projection layer), qpth~\cite{amosOptNetDifferentiableOptimization2017} (differentiable QP layer), DC3~\cite{dontiDC3LearningMethod2021}, GLinSAT~\cite{zengGLinSATGeneralLinear2024}, Our method with both post-processing and joint-training mode. For all methods we use the same training and evaluation protocol as in~\cite{zengGLinSATGeneralLinear2024}. We report the average Sharpe ratio, the mean absolute inequality/equality violations (``Avg Ineq/Eq''), and the average projection time per test epoch.

\begin{table}[t]
\centering
\setlength{\tabcolsep}{3.5mm}
\small
\begin{tabular}{lcccc}
\toprule
\textbf{Method} & \textbf{Sharpe} & \textbf{\makecell{Avg\\Ineq/Eq ($10^{-2}$)}} & \textbf{\makecell{Proj.\\time (s)}} \\
\midrule
NN (no proj)   & 2.16 & 49.6 / 0.0  & 3.0e-4 \\
qpth           & 2.42 & 0.0 / 0.2  & 1.41 \\
DC3            & 2.97 & 8.4 / 0.0 & 1.1e-1 \\
GLinSAT        & 2.28 & \textbf{0.0 / 0.0}  & 2.6e-1 \\
Ours (Post)    & 2.93 & \textbf{0.0 / 0.0}  & 2.5e-1 \\
Ours (Train)   & \textbf{3.02} & \textbf{0.0 / 0.0} & 2.5e-1 \\
\bottomrule
\end{tabular}
\caption{Predictive portfolio allocation on the S\&P~500 dataset. ``Avg Ineq/Eq'' reports mean absolute inequality/equality violations. All methods share the same backbone; only the constraint satisfaction layer and training mode differ.}
\label{tab:snp500_portfolio}
\end{table}

Table~\ref{tab:snp500_portfolio} shows that the unconstrained NN baseline attains a moderate Sharpe ratio but exhibits large inequality violations, rendering many portfolios infeasible. qpth and DC3 significantly reduce one type of violation but still leave a non-negligible number of infeasible portfolios. GLinSAT achieves zero infeasibility under a $1\text{e-}3$ tolerance, but at the cost of a noticeably lower Sharpe ratio. In contrast, T-SKM-Net attains both strict feasibility and superior portfolio performance: even in post-processing mode it matches DC3 in Sharpe ratio while eliminating all infeasible portfolios, and in joint training mode it further improves the Sharpe ratio to $3.02$ with essentially zero average violations, at a projection cost comparable to GLinSAT and much lower than qpth. This experiment demonstrates that T-SKM-Net provides state-of-the-art performance on this task.

\subsection{Ablation Study}

Due to space limitations, we defer detailed ablation studies on null space transformation, sampling strategies, and SKM variants to the technical appendix, where we provide additional efficiency and accuracy comparisons.

\section{Conclusion}

This work presents T-SKM-Net, the first neural network framework to leverage Sampling Kaczmarz-Motzkin methods for linear constraint satisfaction. The proposed framework addresses the fundamental challenge of ensuring strict constraint satisfaction in neural networks while maintaining computational efficiency and end-to-end trainability.

The key contributions include: (1) integrating SKM methods to neural network constraint satisfaction for handling input-dependent dynamic linear constraints; (2) theoretical guarantees demonstrating L2 projection approximation in expectation and end-to-end trainability despite non-differentiable argmax operations; and (3) a flexible framework supporting both post-processing and joint training modes.

Experimental validation demonstrates significant computational advantages: 4-5$\times$ speedups compared to commercial Gurobi solver on L2 projection tasks, over 25$\times$ acceleration compared to pandapower solver on DC Optimal Power Flow with zero constraint violations, and superior performance compared to existing neural constraint satisfaction methods.

Several limitations warrant future investigation. The framework is currently restricted to linear constraints, and the selection of sampling size $\beta$ lacks theoretical guidance, relying on empirical rules. Additionally, backpropagation memory consumption scales linearly with iteration count. Future research directions include developing theoretical frameworks for optimal $\beta$ selection, exploring nonlinear constraint extensions (requiring fundamentally different approaches), and investigating memory-efficient optimization techniques such as implicit differentiation or selective computational graph preservation.

The T-SKM-Net framework demonstrates that classical iterative methods can be effectively integrated into modern deep learning pipelines, opening new possibilities for constraint-aware neural network design in safety-critical applications.

\section*{Acknowledgments}

This work was supported in part by the National Key R\&D Program of China (No. 2024YFB2409300).

\bibliography{aaai2026}

\newpage
\def\isMainFile{}
\makeatletter
\@ifundefined{isMainFile}{
  \newif\ifreproStandalone
  \reproStandalonetrue
}{
  \newif\ifreproStandalone
  \reproStandalonefalse
}

\ifreproStandalone
\documentclass[11pt]{article}

\usepackage{amsmath, amssymb}
\usepackage{graphicx}
\usepackage{natbib}          
\usepackage{geometry}        
\usepackage{times}           
\usepackage{float}           
\usepackage{booktabs}        
\usepackage{caption}
\usepackage{subcaption}
\usepackage{color}

\usepackage{amsthm}
\usepackage{amsmath}
\usepackage{multirow}

\newtheorem{theorem}{Theorem}
\newtheorem{lemma}{Lemma}
\newtheorem{proposition}{Proposition}
\newtheorem{corollary}{Corollary}
\newtheorem{definition}{Definition}
\newtheorem{remark}{Remark}
\newtheorem{assumption}{Assumption}

\geometry{margin=1in}

\title{Technical Appendix}
\author {}
\date{}

\begin{document}
\pagestyle{empty}
\thispagestyle{empty}
\fi

\onecolumn

\ifreproStandalone
\maketitle
\else
  \begin{center}
    {\LARGE\bfseries Technical Appendix\par}
  \end{center}
\fi

\setcounter{section}{0}
\setcounter{subsection}{0}
\setcounter{figure}{0}
\setcounter{table}{0}
\setcounter{equation}{0}
\setcounter{theorem}{0}
\setcounter{lemma}{0}
\setcounter{corollary}{0}
\setcounter{assumption}{0}

\appendix

\section{Theoretical Proofs}
This section analyzes two core properties of the T-SKM-Net framework: effectiveness as a post-processing strategy and feasibility of end-to-end training. We prove three main theorems concerning the L2 projection approximation of the SKM algorithm, the L2 projection approximation of the entire framework, and the unbiasedness of gradient estimation. The first two theorems establish T-SKM-Net's effectiveness for constraint satisfaction post-processing, while the third validates its suitability for gradient-based neural network training.

\subsection{Effectiveness as Post-processing Strategy}

To validate T-SKM-Net as a post-processing strategy, we establish bounded approximation guarantees to the true L2 projection. The analysis first examines the core SKM algorithm for inequality constraints, then extends to the complete framework handling mixed constraint systems.

\subsubsection{L2 Projection Approximation of SKM Methods}

\begin{theorem}[L2 Projection Approximation of SKM Algorithm]
\label{thm:appendix-skm-l2}
Consider the inequality constraint $Az\leq b$ where $A\in \mathbb{R}^{p\times n}$ and $b \in \mathbb{R}^p$, with feasible region $\mathcal{P}=\{z: Az\leq b\}$. Let the SKM method start from initial point $z_{0}$ and obtain $z_{k}$ after $k$ iterations. Then the expected distance between the iterate and initial point satisfies:
$$\mathbb{E}[d(z_{k},z_{0})] \leq 2 \cdot d(z_{0},\mathcal{P})$$
where $d(z_0, \mathcal{P}) = \min_{z \in \mathcal{P}} \|z - z_0\|_2$ is the distance from the initial point to the feasible region.
\end{theorem}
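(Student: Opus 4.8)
The plan is to anchor the analysis at the fixed point $z^* = P(z_0)$, the exact $L_2$ projection of $z_0$ onto $\mathcal{P}$, so that by definition $\|z_0 - z^*\| = d(z_0,\mathcal{P})$. I would then show that the whole iterate sequence is Fej\'er monotone relative to this anchor, i.e.\ $\|z_{k+1}-z^*\| \le \|z_k - z^*\|$ along every sampling path, after which a single application of the triangle inequality converts closeness to $z^*$ into the desired bound on the displacement from $z_0$.

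First I would expand the squared distance for one SKM step. Writing $a = a_{i^*}$ and $r = (a^\top z_k - b_{i^*})_+$ for the residual of the selected constraint, the update $z_{k+1} = z_k - \delta\,(r/\|a\|^2)\,a$ gives, by completing the square,
\[
\|z_{k+1}-z^*\|^2 = \|z_k - z^*\|^2 - 2\delta\frac{r}{\|a\|^2}\,a^\top(z_k - z^*) + \delta^2\frac{r^2}{\|a\|^2}.
\]
The key step is to lower-bound the cross term: since $z^* \in \mathcal{P}$ satisfies every constraint, in particular $a^\top z^* \le b_{i^*}$, so whenever $r>0$ we have $a^\top(z_k - z^*) = (a^\top z_k - b_{i^*}) + (b_{i^*} - a^\top z^*) \ge r$. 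Substituting this bound yields
\[
\|z_{k+1}-z^*\|^2 \le \|z_k - z^*\|^2 - \delta(2-\delta)\frac{r^2}{\|a\|^2},
\]
and when $r=0$ the iterate is unchanged, so the same inequality holds trivially.

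Next I would invoke the step-size condition $0<\delta<2$, which makes $\delta(2-\delta)>0$ and forces $\|z_{k+1}-z^*\| \le \|z_k - z^*\|$. Iterating from $k=0$ gives $\|z_k - z^*\| \le \|z_0 - z^*\| = d(z_0,\mathcal{P})$ for every $k$. Crucially this holds for every realization of the random sampling, because the contraction never uses \emph{which} constraint was chosen, only that $z^*$ is feasible for it. The triangle inequality then gives, pathwise, $\|z_k - z_0\| \le \|z_k - z^*\| + \|z^* - z_0\| \le 2\,d(z_0,\mathcal{P})$, and taking expectations (a formality, since the bound already holds surely) delivers $\mathbb{E}[d(z_k,z_0)] \le 2\,d(z_0,\mathcal{P})$.

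I expect the one genuine obstacle to be the cross-term bound $a^\top(z_k - z^*) \ge r$: this is the single place where feasibility of the anchor and the positive-part structure of the residual must interact, and it is what makes the per-step contraction hold uniformly over all sampled constraints. Everything else is routine completion of squares and the triangle inequality. It is worth noting that the contraction, and hence the final bound, holds surely rather than only in expectation, so Jensen's inequality is not strictly necessary here; it would enter only if one chose to track the expected squared distances $\mathbb{E}[\|z_k - z^*\|^2]$ and pass back to the first moment at the end.
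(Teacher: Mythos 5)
Your proposal is correct and follows essentially the same route as the paper's proof: the same anchor $z^* = P(z_0)$, the same cross-term bound $a^\top(z_k - z^*) \ge r$ from feasibility of the anchor, the same Fej\'er-monotone descent inequality $\|z_{k+1}-z^*\|^2 \le \|z_k - z^*\|^2 - \delta(2-\delta)\,r^2/\|a\|^2$, and the same concluding triangle inequality. Your closing observation is a small but genuine sharpening of the paper's write-up: since the contraction holds along every sampling path, the bound $d(z_k,z_0) \le 2\,d(z_0,\mathcal{P})$ holds surely, so the paper's detour through $\mathbb{E}[V(z_k)]$ and Jensen's inequality is indeed dispensable.
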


\begin{proof}
We construct an auxiliary function $V(z) = \|z - P(z_0)\|^2$, where $P(z_0)$ is the L2 projection of $z_0$ onto the feasible region $\mathcal{P}$.

For the SKM single-step update 
$$z_{k+1} = z_k - \delta \frac{(a_{i^*}^T z_k - b_{i^*})_+}{\|a_{i^*}\|^2} a_{i^*}$$
we expand $V(z_{k+1})$:
\begin{align}
V(z_{k+1}) &= V(z_k) + \delta^2 \frac{(a_{i^*}^T z_k - b_{i^*})_+^2}{\|a_{i^*}\|^2} \nonumber\\
&\quad - 2\delta \frac{(a_{i^*}^T z_k - b_{i^*})_+}{\|a_{i^*}\|^2} \langle a_{i^*}, z_k - P(z_0) \rangle
\end{align}

Since $P(z_0) \in \mathcal{P}$, we have $a_{i^*}^T P(z_0) \leq b_{i^*}$, therefore:
$$\langle a_{i^*}, z_k - P(z_0) \rangle = a_{i^*}^T z_k - a_{i^*}^T P(z_0) \geq a_{i^*}^T z_k - b_{i^*}$$

When $a_{i^*}^T z_k > b_{i^*}$, we have $(a_{i^*}^T z_k - b_{i^*})_+ = a_{i^*}^T z_k - b_{i^*}$, thus:
$$\frac{(a_{i^*}^T z_k - b_{i^*})_+}{\|a_{i^*}\|^2} \langle a_{i^*}, z_k - P(z_0) \rangle \geq \frac{(a_{i^*}^T z_k - b_{i^*})_+^2}{\|a_{i^*}\|^2}$$

Substituting this yields:
$$V(z_{k+1}) \leq V(z_k) - \delta(2-\delta) \frac{(a_{i^*}^T z_k - b_{i^*})_+^2}{\|a_{i^*}\|^2}$$

When $0 < \delta < 2$, we have $V(z_{k+1}) \leq V(z_k)$, indicating that the sequence $\{V(z_k)\}$ is monotonically decreasing.

Taking expectation and applying Jensen's inequality:
\begin{align}
\mathbb{E}[V(z_k)] &\leq V(z_0) = d(z_0, \mathcal{P})^2\\
\mathbb{E}[\|z_k - P(z_0)\|] &\leq \sqrt{\mathbb{E}[\|z_k - P(z_0)\|^2]} \leq d(z_0, \mathcal{P})
\end{align}

Applying the triangle inequality:
\begin{align}
\mathbb{E}[d(z_k, z_0)] &\leq \mathbb{E}[\|z_k - P(z_0)\|] + \|P(z_0) - z_0\| \\
&\leq d(z_0, \mathcal{P}) + d(z_0, \mathcal{P}) = 2d(z_0, \mathcal{P})
\end{align}
\end{proof}

\subsubsection{L2 Projection Approximation of T-SKM-Net Framework}

We now extend the analysis to the complete T-SKM-Net framework, examining how nullspace transformation for equality constraints interacts with subsequent SKM processing of inequality constraints.

\begin{theorem}[L2 Projection Approximation of T-SKM-Net Framework]
\label{thm:appendix-tskm-l2}
Consider the mixed linear constraint system $A(x)z \leq b(x)$ and $C(x)z = d(x)$, with feasible region $\mathcal{F}(x) = \{z : A(x)z \leq b(x), C(x)z = d(x)\}$. Let $P(y_0)$ be the L2 projection of initial point $y_0$ onto feasible region $\mathcal{F}(x)$, and $N$ be the nullspace basis matrix of equality constraint matrix $C(x)$. When the T-SKM-Net framework executes $k$ iterations, the expected distance between output $z_k$ and initial point $y_0$ satisfies:
$$\mathbb{E}[d(z_k, y_0)] \leq \sqrt{1+4\kappa(N)^2} \cdot d(y_0, P(y_0))$$
where $\kappa(N) = \frac{\sigma_{\max}(N)}{\sigma_{\min}(N)}$ is the condition number of nullspace basis matrix $N$. In particular, when $N$ is the nullspace basis matrix obtained via SVD, the condition number $\kappa(N)=1$, thus:
$$\mathbb{E}[d(z_k, y_0)] \leq \sqrt{5} \cdot d(y_0, P(y_0))$$
\end{theorem}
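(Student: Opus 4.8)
The plan is to analyze the framework's output through the null-space parametrization $z_k = z_{\text{proj}} + N w_k$ that the algorithm actually produces, treating the equality-projection error and the SKM-driven null-space component separately and exploiting their orthogonality. Write $e = y_0 - z_{\text{proj}}$ for the error of the initial projection onto the equality manifold $\mathcal{M} = \{z : C(x)z = d(x)\}$. Since $z_{\text{proj}} = y_0 - C^{\dagger}(C y_0 - d)$, the vector $e$ lies in $\mathrm{range}(C^\top) = \mathrm{null}(C)^{\perp}$, whereas every $N w_k \in \mathrm{range}(N) = \mathrm{null}(C)$; hence $e \perp N w_k$. From $z_k - y_0 = N w_k - e$ the Pythagorean identity gives $\|z_k - y_0\|^2 = \|N w_k\|^2 + \|e\|^2$, with $e$ deterministic and $w_k$ random. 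Taking expectations, it suffices to bound $\|e\|^2$ and $\mathbb{E}[\|N w_k\|^2]$ separately and then apply Jensen's inequality to pass from $\mathbb{E}[\|z_k - y_0\|^2]$ to $\mathbb{E}[d(z_k, y_0)]$.

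For the first term I would note $\|e\| = d(y_0, \mathcal{M})$, and since $\mathcal{F}(x) \subseteq \mathcal{M}$ we get $d(y_0, \mathcal{M}) \leq d(y_0, \mathcal{F}(x)) = d(y_0, P(y_0))$, so $\|e\|^2 \leq d(y_0, P(y_0))^2$. For the second term I would apply Theorem~\ref{thm:appendix-skm-l2} in the transformed $w$-space, where SKM starts from $w_0 = 0$ and solves $A_{\text{new}} w \leq b_{\text{new}}$ with feasible set $\mathcal{P}_w$. Crucially, the proof of that theorem yields the stronger squared bound $\mathbb{E}[\|w_k - P_w(0)\|^2] \leq d(0, \mathcal{P}_w)^2$, where $P_w(0)$ is the projection of $0$ onto $\mathcal{P}_w$; combining this with the triangle inequality $\|w_k\| \leq \|w_k - P_w(0)\| + d(0, \mathcal{P}_w)$, expanding the square, and using $\mathbb{E}[\|w_k - P_w(0)\|] \leq d(0,\mathcal{P}_w)$ gives $\mathbb{E}[\|w_k\|^2] \leq 4\, d(0, \mathcal{P}_w)^2$. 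The singular-value bound $\|N w_k\| \leq \sigma_{\max}(N)\|w_k\|$ then yields $\mathbb{E}[\|N w_k\|^2] \leq 4\,\sigma_{\max}(N)^2\, d(0, \mathcal{P}_w)^2$.

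The remaining geometric step, which I expect to be the \emph{crux}, is to bound $d(0, \mathcal{P}_w)$ by the true projection distance $d(y_0, P(y_0))$. Because $P(y_0) \in \mathcal{F}(x) \subseteq \mathcal{M}$, the displacement $P(y_0) - z_{\text{proj}}$ lies in $\mathrm{range}(N)$, so there exists $w^{\star}$ with $P(y_0) = z_{\text{proj}} + N w^{\star}$; moreover $w^{\star} \in \mathcal{P}_w$ since $P(y_0)$ satisfies the inequality constraints, which forces $d(0, \mathcal{P}_w) \leq \|w^{\star}\|$. Using $\sigma_{\min}(N)\|w^{\star}\| \leq \|N w^{\star}\| = \|P(y_0) - z_{\text{proj}}\|$ together with a second orthogonal split $\|P(y_0) - y_0\|^2 = \|P(y_0) - z_{\text{proj}}\|^2 + \|e\|^2$ (valid because $P(y_0) - z_{\text{proj}} \in \mathrm{range}(N) \perp e$), I obtain $\|P(y_0) - z_{\text{proj}}\| \leq d(y_0, P(y_0))$ and hence $d(0, \mathcal{P}_w) \leq d(y_0, P(y_0))/\sigma_{\min}(N)$.

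Assembling the pieces gives $\mathbb{E}[\|N w_k\|^2] \leq 4\,\kappa(N)^2\, d(y_0, P(y_0))^2$, so $\mathbb{E}[\|z_k - y_0\|^2] \leq (1 + 4\kappa(N)^2)\, d(y_0, P(y_0))^2$, and Jensen's inequality delivers $\mathbb{E}[d(z_k, y_0)] \leq \sqrt{1 + 4\kappa(N)^2}\, d(y_0, P(y_0))$. For an SVD-derived $N$ the columns are orthonormal, so $\sigma_{\min}(N) = \sigma_{\max}(N) = 1$ and $\kappa(N) = 1$, yielding the stated $\sqrt{5}$ constant. The main obstacle is orchestrating the two separate orthogonal decompositions consistently and ensuring the factor-$4$ second-moment bound from Theorem~\ref{thm:appendix-skm-l2} is genuinely available, rather than only the first-moment statement appearing in its conclusion.
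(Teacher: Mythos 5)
Your proof is correct, and it shares the paper's skeleton — the same null-space parametrization $z_k = z_{\text{proj}} + Nw_k$, the same orthogonality $e \perp Nw$, the same identification of $\mathcal{P}_w$ and of a coefficient vector $w^\star$ with $P(y_0) = z_{\text{proj}} + Nw^\star$, and the same singular-value bounds — but the final assembly is genuinely different. The paper works with first moments: it applies Theorem~\ref{thm:appendix-skm-l2} as stated to get $\mathbb{E}[\|w_k\|] \leq 2\|w_{\text{proj}}\|$, uses the triangle inequality $\mathbb{E}[\|z_k - y_0\|] \leq \|e\| + \mathbb{E}[\|Nw_k\|]$, and then recovers the constant via a Cauchy--Schwarz step applied to the \emph{exact} orthogonal decomposition $d(y_0,P(y_0))^2 = \|e\|^2 + \|Nw^*\|^2$. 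You instead work with second moments: the Pythagorean identity $\|z_k - y_0\|^2 = \|e\|^2 + \|Nw_k\|^2$, separate bounds $\|e\|^2 \leq d(y_0,P(y_0))^2$ (via $\mathcal{F}(x) \subseteq \mathcal{M}$) and $\mathbb{E}[\|Nw_k\|^2] \leq 4\kappa(N)^2 d(y_0,P(y_0))^2$, and Jensen at the end. The obstacle you flagged is real but resolves favorably: your route cannot be completed from the statement of Theorem~\ref{thm:appendix-skm-l2} alone (which is a first-moment bound), yet its proof establishes Fejér monotonicity \emph{pointwise} in the sampling path — $\|w_k - P_w(0)\| \leq \|w_0 - P_w(0)\| = d(0,\mathcal{P}_w)$ for every realization when $0 < \delta < 2$ — so in fact $\|w_k\| \leq 2\,d(0,\mathcal{P}_w)$ holds deterministically and your factor-$4$ second-moment bound follows immediately (your expansion via $\mathbb{E}[\|w_k - P_w(0)\|] \leq d(0,\mathcal{P}_w)$ is correct but more roundabout than necessary). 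What each approach buys: the paper's version treats the SKM theorem as a black box and compensates with the Cauchy--Schwarz trick; yours avoids the triangle inequality on the output entirely (the error split is exact by orthogonality) at the price of reopening the SKM proof, and it makes transparent why the deterministic part $\|e\|$ and the random part $\|Nw_k\|$ contribute additively in squares. Both routes land on the identical constant $\sqrt{1+4\kappa(N)^2}$, which is a reassuring consistency check.
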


\begin{proof}
According to the T-SKM-Net algorithm workflow, the general solution of equality constraint $C(x)z = d(x)$ is expressed through SVD decomposition as:
$$z = z_{\text{proj}} + Nw$$
where $z_{\text{proj}} = y_0 - C^{\dagger}(Cy_0 - d)$ and $N$ is the nullspace basis matrix.

Define the projection error vector $e = y_0 - z_{\text{proj}}$. Due to projection properties, $e \in \text{row}(C)$ and $Nw \in \text{null}(C)$, thus $e \perp Nw$.

The L2 projection of the original problem can be expressed as $P(y_0) = z_{\text{proj}} + Nw^*$, where $w^* = \arg\min_{w \in \mathcal{P}_w} \|Nw\|^2$ and $\mathcal{P}_w = \{w : (AN)w \leq b - Az_{\text{proj}}\}$ is the transformed feasible region.

Orthogonal decomposition of distance:
$$d(y_0, P(y_0))^2 = \|y_0 - (z_{\text{proj}} + Nw^*)\|^2 = \|e\|^2 + \|Nw^*\|^2$$

In the transformed constraint space, the SKM algorithm starts from $w_0 = 0$. Let $w_{\text{proj}} = \arg\min_{w \in \mathcal{P}_w} \|w\|^2$ be the L2 projection of the subproblem.

According to Theorem~\ref{thm:appendix-skm-l2}, the SKM algorithm in the subspace satisfies:
$$\mathbb{E}[\|w_k\|] \leq 2\|w_{\text{proj}}\|$$

Since $w^* \in \mathcal{P}_w$ and $w_{\text{proj}}$ minimizes $\|w\|^2$, we have $\|w_{\text{proj}}\| \leq \|w^*\|$.

Combined with the matrix singular value property $\|w^*\| \leq \frac{1}{\sigma_{\min}(N)}\|Nw^*\|$, we obtain:
$$\|w_{\text{proj}}\| \leq \frac{1}{\sigma_{\min}(N)}\|Nw^*\|$$

Using orthogonality and the triangle inequality:
$$\mathbb{E}[d(z_k, y_0)] = \mathbb{E}[\|e + Nw_k\|] \leq \|e\| + \mathbb{E}[\|Nw_k\|]$$

Further estimation:
\begin{align}
\mathbb{E}[\|Nw_k\|] &\leq \|N\|_2 \mathbb{E}[\|w_k\|] \nonumber\\
&\leq 2\|N\|_2 \|w_{\text{proj}}\| \nonumber\\
&= 2 \sigma_{\max}(N) \|w_{\text{proj}}\| \nonumber\\
&\leq 2 \sigma_{\max}(N) \frac{1}{\sigma_{\min}(N)} \|Nw^*\| \nonumber\\
&= 2\kappa(N)\|Nw^*\|
\end{align}

Therefore:
$$\mathbb{E}[d(z_k, y_0)] \leq \|e\| + 2\kappa(N)\|Nw^*\|$$

By the Cauchy-Schwarz inequality:
\begin{align}
&\|e\| + 2\kappa(N)\|Nw^*\| \nonumber\\
&\leq \sqrt{1 + 4\kappa(N)^2} \cdot \sqrt{\|e\|^2 + \|Nw^*\|^2} \nonumber\\
&= \sqrt{1 + 4\kappa(N)^2} \cdot d(y_0, P(y_0))
\end{align}

Thus:
$$\mathbb{E}[d(z_k, y_0)] \leq \sqrt{1 + 4\kappa(N)^2} \cdot d(y_0, P(y_0))$$

In particular, when $N$ is the nullspace basis matrix obtained via SVD, since $N$ is part of the right singular matrix with $N^TN=I$, we have $\kappa(N)=1$, therefore:
$$\mathbb{E}[d(z_k, y_0)] \leq \sqrt{5} \cdot d(y_0, P(y_0))$$
\end{proof}

\subsection{End-to-End Trainability Analysis}

For T-SKM-Net integration as differentiable layers, we validate reliable gradient estimation despite non-differentiable operations like $\arg\max$ constraint selection. This analysis establishes T-SKM-Net's suitability for end-to-end neural network training.

Since the constraint transformation process (SVD decomposition and linear transformation) in the T-SKM-Net framework is differentiable, the core challenge for end-to-end trainability lies in handling non-differentiable operations in the SKM algorithm. This section analyzes the gradient estimation properties of the SKM algorithm, proving that despite the existence of non-differentiable $\arg\max$ operations, the algorithm still supports unbiased gradient estimation.

\subsubsection{Problem Setup}

Consider a parameterized linear inequality system $A(x)w \leq b(x)$, where $A(x) \in \mathbb{R}^{p \times m}$, $b(x) \in \mathbb{R}^p$, and $x \in \mathbb{R}^{n_{\text{in}}}$ is the input parameter. The SKM algorithm executes $K$ iterations: $w_0 = 0$, $w_{k+1} = F(w_k, A(x), b(x), \mathcal{S}_k)$, where:

\begin{enumerate}
\item $\mathcal{S}_k \subset \{1,\ldots,p\}$ is the set of $\beta$ constraint indices sampled at step $k$ (with replacement)
\item $F(w, A, b, \mathcal{S}) = w - \delta \frac{r_{i^*}^+}{\|a_{i^*}\|^2} a_{i^*}$
\item $i^* = \arg\max_{i \in \mathcal{S}} (a_i^T w - b_i)$, $r_{i^*} = a_{i^*}^T w - b_{i^*}$, $(z)^+ = \max(0,z)$
\end{enumerate}

Let $\omega = (\mathcal{S}_0, \ldots, \mathcal{S}_{K-1})$ be the sampling sequence, and $W_K(x,\omega)$ be the algorithm output.

\begin{assumption}[Non-degeneracy]
\label{assump:nondeg}
There exists a constant $c > 0$ such that $\|a_i(x)\| \geq c$ for all $i,x$.
\end{assumption}

\begin{assumption}[Differentiability]
\label{assump:diff}
$A(x), b(x)$ are differentiable with respect to $x$, and there exist constants $L_A, L_b > 0$ such that $\|\nabla_x A(x)\| \leq L_A$ and $\|\nabla_x b(x)\| \leq L_b$.
\end{assumption}

\begin{assumption}[Boundedness]
\label{assump:bound}
There exists $M > 0$ such that $\|A(x)\|, \|b(x)\| \leq M$ for all relevant $x$.
\end{assumption}

\begin{assumption}[Step Size]
\label{assump:step}
The step size parameter satisfies $0 < \delta < 2$.
\end{assumption}

\begin{assumption}[Non-degeneracy of Tie Events]
\label{assump:tie}
For any fixed $\omega, k$ and any relevant constraint indices $i,j$, the functions $(a_i(x) - a_j(x))^T W_k(x,\omega) - (b_i(x) - b_j(x))$ and $a_i(x)^T W_k(x,\omega) - b_i(x)$ do not vanish identically on any open subset of input space $\mathbb{R}^{n_{\text{in}}}$.
\end{assumption}

\begin{theorem}[End-to-End Trainability of SKM Algorithm]
\label{thm:appendix-trainability}
Under Assumptions~\ref{assump:nondeg}--\ref{assump:tie}, the SKM algorithm has the following trainability properties:

\textbf{(i) Well-defined Expected Gradient}: $\mathbb{E}_\omega[W_K(x,\omega)]$ is almost everywhere differentiable with respect to $x$, and the expected gradient $\nabla_x \mathbb{E}_\omega[W_K(x,\omega)]$ is well-defined.

\textbf{(ii) Finite Gradient Variance}: $\text{Var}(\nabla_x W_K(x,\omega)) < \infty$.

\textbf{(iii) Unbiased Gradient Estimation}: The path derivative computed by automatic differentiation is an unbiased estimator of the expected gradient:
$$\mathbb{E}_\omega[\nabla_x W_K(x,\omega)] = \nabla_x \mathbb{E}_\omega[W_K(x,\omega)]$$
\end{theorem}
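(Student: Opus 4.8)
The plan is to exploit a structural simplification that the theorem statement only hints at: because the sampled sets $\mathcal{S}_0,\dots,\mathcal{S}_{K-1}$ are drawn from the finite index set $\{1,\dots,p\}$ over finitely many iterations, the sampling path $\omega$ ranges over a \emph{finite} set $\Omega$, and the expectation $\mathbb{E}_\omega[W_K(x,\omega)]=\sum_{\omega\in\Omega} p(\omega)\,W_K(x,\omega)$ is a finite weighted sum whose weights $p(\omega)$ are independent of $x$. Once this is in place, claims (i)--(iii) all reduce to a single per-path regularity statement, after which linearity of differentiation does the rest.

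First I would fix a path $\omega$ and prove by induction on $k$ that $x\mapsto W_k(x,\omega)$ is differentiable outside a Lebesgue-null set $N_k^\omega$, and that its Jacobian obeys a uniform bound $\|\nabla_x W_k(x,\omega)\|\le kC$ for a constant $C=C(c,L_A,L_b,M,\delta)$. The base case $W_0\equiv 0$ is immediate. For the inductive step I would observe that non-smoothness of the update $F$ enters only through two operations: the $\arg\max$ selection of $i^*$, which switches precisely on the tie locus $\{x:(a_i-a_j)^\top W_k-(b_i-b_j)=0\}$ for $i,j\in\mathcal{S}_k$, and the kink of $(\cdot)^+$, located on $\{x:a_{i^*}^\top W_k-b_{i^*}=0\}$. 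Assumption~\ref{assump:tie} guarantees these defining functions do not vanish identically on open sets, so (on each smooth piece of $W_k$ supplied by the inductive hypothesis) their zero sets are null; taking the finite union over index pairs and over the finitely many pieces keeps $N_{k+1}^\omega$ null. On the complement, $i^*$ is locally constant and $(\cdot)^+$ is locally smooth, so the chain rule applies: differentiating $W_{k+1}=W_k-\delta\,(a_{i^*}^\top W_k-b_{i^*})^+\|a_{i^*}\|^{-2}a_{i^*}$ and bounding each factor via $\|a_{i^*}\|\ge c$ (Assumption~\ref{assump:nondeg}), $\|\nabla_x A\|\le L_A,\ \|\nabla_x b\|\le L_b$ (Assumption~\ref{assump:diff}), $\|A\|,\|b\|\le M$ (Assumption~\ref{assump:bound}), together with a companion induction that the iterates $W_k$ themselves stay uniformly bounded, yields $\|\nabla_x W_{k+1}\|\le(k+1)C$.

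With the per-path statement in hand the conclusions follow quickly. Let $N=\bigcup_{\omega\in\Omega}N_K^\omega$, a finite union of null sets, hence null. On $\mathbb{R}^{n_{\text{in}}}\setminus N$ every summand $W_K(\cdot,\omega)$ is differentiable and, because $i^*$ is locally constant there, the path derivative returned by automatic differentiation coincides with the genuine Jacobian $\nabla_x W_K(\cdot,\omega)$. Hence the finite sum is differentiable with $\nabla_x\mathbb{E}_\omega[W_K(x,\omega)]=\sum_{\omega}p(\omega)\,\nabla_x W_K(x,\omega)$, which is exactly (i) and, read right-to-left, the unbiasedness (iii); here it is essential that the weights $p(\omega)$ carry no $x$-dependence, so no score-function correction appears. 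For (ii), the uniform bound $\|\nabla_x W_K(x,\omega)\|\le KC$ gives $\mathrm{Var}(\nabla_x W_K)\le\mathbb{E}_\omega[\|\nabla_x W_K\|^2]\le(KC)^2<\infty$; the same bound furnishes a dominating function should one prefer to justify the interchange through dominated convergence rather than finite linearity.

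The main obstacle I anticipate is the null-measure claim for the tie and kink loci. The delicate point is that these loci are defined through $W_k(x)$, which is itself only piecewise regular, so the induction must be organized piece-by-piece; more fundamentally, mere differentiability (Assumption~\ref{assump:diff}) does not force a non-vanishing function's zero set to be null — a Lipschitz function can vanish on a positive-measure set (e.g.\ the distance to a fat Cantor set) without vanishing on any open set. The argument therefore needs the defining functions to be real-analytic on each piece, which holds when $A(x),b(x)$ are analytic (in particular affine, as with the DCOPF loads); I would either strengthen Assumption~\ref{assump:tie} to this effect or invoke analyticity explicitly, since that is what converts ``not identically zero on an open set'' into ``null zero set.'' The remaining work — the Jacobian recursion and the boundedness of $W_k$ under Assumptions~\ref{assump:nondeg}--\ref{assump:step} — is routine.
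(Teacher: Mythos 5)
Your proposal follows the same skeleton as the paper's proof: a per-path induction establishing piecewise differentiability of $W_k(\cdot,\omega)$ with the Jacobian bound $\|\nabla_x W_k\|\le kC$ (resting on boundedness of the iterates via Fej\'er monotonicity and the single-step Jacobian bounds drawn from Assumptions~\ref{assump:nondeg}--\ref{assump:step}), a null-measure argument for the $\arg\max$-tie and $(\cdot)^+$-kink loci, and finally an interchange of expectation and differentiation justified by the $x$-independence of the sampling probabilities. Two differences are worth recording. First, you observe that the path space $\Omega$ is finite (at most $p^{\beta K}$ outcomes), so the expectation is a finite sum with $x$-independent weights and the interchange in (i) and (iii) follows from plain linearity of differentiation on the complement of a finite union of null sets; the paper instead invokes the dominated convergence theorem, which is heavier machinery than the situation requires (your uniform bound $KC$ would supply the dominating function anyway, as you note). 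Second, and more substantively, you correctly identify the one soft spot in the paper's argument: Lemma~\ref{lem:tie} asserts that the tie conditions ``define hypersurfaces'' and hence have zero Lebesgue measure, but Assumption~\ref{assump:tie} only states that the defining functions do not vanish identically on any open set, and for merely differentiable (even Lipschitz or $C^\infty$) functions this does not imply a null zero set --- the distance function to a fat Cantor set is the standard counterexample. Converting ``not identically zero on any open set'' into ``null zero set'' requires something extra, such as real-analyticity of $A(x),b(x)$ (hence of the piecewise tie functions, as holds in the affine/DCOPF setting) or a transversality condition making the zero level a regular value; your proposal to strengthen the assumption accordingly is the right repair, and it patches the paper's proof as well as your own. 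So your argument is structurally the same as the paper's but strictly more careful at the point where the paper's reasoning is incomplete.
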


\begin{proof}
We establish the result through a series of lemmas.

\begin{lemma}[Boundedness of Iteration Sequence]
\label{lem:boundedness}
Based on the proof of Theorem~\ref{thm:appendix-skm-l2}, the SKM algorithm has Fejér monotonicity property. The iteration sequence $\{w_k\}$ satisfies: for any feasible point $w^* \in \mathcal{C} = \{w : A(x)w \leq b(x)\}$,
$$\|w_{k+1} - w^*\|^2 \leq \|w_k - w^*\|^2 - \delta(2-\delta)\frac{(r_{i^*}^+)^2}{\|a_{i^*}\|^2}$$
Therefore, there exists a constant $B$ such that $\sup_{k \geq 0} \|w_k\| \leq B < \infty$.
\end{lemma}

\begin{lemma}[Jacobian Properties of Single-step Mapping]
\label{lem:jacobian}
The Jacobian of single-step mapping $F(w,A,b,\mathcal{S})$ satisfies:

\textbf{(1) With respect to w}: When $a_{i^*}^T w > b_{i^*}$ and $\arg\max$ is unique:
$$\frac{\partial F}{\partial w} = I - \delta P_{a_{i^*}}, \quad \text{where} \quad P_a = \frac{aa^T}{\|a\|^2}$$
Since $P_a$ is a rank-1 projection matrix with eigenvalues $\{1, 0, \ldots, 0\}$:
$$\left\|\frac{\partial F}{\partial w}\right\|_2 = \max\{|1-\delta|, 1\} = 1 \quad (0 < \delta < 2)$$

\textbf{(2) With respect to b}: When constraints are active:
$$\left\|\frac{\partial F}{\partial b}\right\| \leq \frac{\delta}{c} := C_b$$

\textbf{(3) With respect to A}: Using Lemma~\ref{lem:boundedness} with $\|w\| \leq B$ and $|r_{i^*}^+| \leq MB + M$:
$$\left\|\frac{\partial F}{\partial A}\right\| \leq \delta \left( \frac{B}{c} + \frac{3(MB + M)}{c^2} \right) := C_A$$
\end{lemma}

\begin{lemma}[Zero Measure of Tie Events]
\label{lem:tie}
Under Assumption~\ref{assump:tie}, define the tie event
$$T_k = \{(x,\omega): \exists i \neq j \in \mathcal{S}_k, a_i^T W_k(x,\omega) - b_i = a_j^T W_k(x,\omega) - b_j\}$$
Then $\mu(T_k) = 0$, where $\mu$ is the Lebesgue measure.
\end{lemma}

\begin{proof}[Proof of Lemma~\ref{lem:tie}]
For fixed $\omega$, $W_k(x,\omega)$ is a piecewise differentiable function of $x$. The condition $a_i^T W_k - b_i = a_j^T W_k - b_j$ is equivalent to $(a_i - a_j)^T W_k = b_i - b_j$, which under Assumption~\ref{assump:tie} defines hypersurfaces in $x$ space. Similarly, the boundary where a single constraint transitions from non-violation to violation $a_i^T W_k = b_i$ also forms hypersurfaces.

Since $|\mathcal{S}_k| = \beta$ is finite, there are only finitely many constraint pairs $(i,j)$, thus $T_k$ is a union of finitely many hypersurfaces. Each hypersurface has zero Lebesgue measure in $n_{\text{in}}$-dimensional space, and a finite union of zero-measure sets remains zero-measure, hence $\mu(T_k) = 0$.

Similarly, $\mu(\bigcup_{k=0}^{K-1} T_k) = 0$ as it is a finite union of zero-measure sets.
\end{proof}

Now we prove the main theorem parts:

\textbf{(i) Well-defined Expected Gradient}: 
Through mathematical induction, at non-tie points we apply the chain rule:
$$\nabla_x W_{k+1} = \frac{\partial F}{\partial w}\nabla_x W_k + \frac{\partial F}{\partial A}\nabla_x A + \frac{\partial F}{\partial b}\nabla_x b$$

Let $C = C_A L_A + C_b L_b$. From Lemma~\ref{lem:jacobian}, we obtain the recurrence:
$$\|\nabla_x W_{k+1}\| \leq 1 \cdot \|\nabla_x W_k\| + C$$
Therefore: $\|\nabla_x W_k\| \leq kC$.

Setting $D_K = KC < \infty$, by the dominated convergence theorem and Lemma~\ref{lem:tie}:
$$\nabla_x \mathbb{E}_\omega[W_K(x,\omega)] = \mathbb{E}_\omega[\nabla_x W_K(x,\omega)]$$

\textbf{(ii) Finite Gradient Variance}: 
$$\mathbb{E}[\|\nabla_x W_K\|^2] \leq (KC)^2 < \infty$$

\textbf{(iii) Unbiased Gradient Estimation}: Since the probability $P(\omega)$ of sampling sequence $\omega$ is independent of parameter $x$:
$$\nabla_x \mathbb{E}_\omega[W_K(x,\omega)] = \sum_\omega P(\omega) \nabla_x W_K(x,\omega) = \mathbb{E}_\omega[\nabla_x W_K(x,\omega)]$$
\end{proof}

\begin{corollary}
The SKM constraint satisfaction layer in the T-SKM-Net framework supports standard automatic differentiation, providing unbiased, finite-variance gradient estimates of the loss function with respect to upstream network parameters, thus ensuring the feasibility of end-to-end training.
\end{corollary}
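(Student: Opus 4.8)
The plan is to reduce all three claims to three structural facts about the SKM recursion: uniform boundedness of the iterates, uniform boundedness of the single-step Jacobians off a ``bad'' set, and the Lebesgue-null nature of that bad set. The crucial simplification I would exploit is that the expectation over the sampling sequence $\omega$ is a finite sum: at each of the $K$ steps one draws $\beta$ indices (with replacement) from $p$ constraints, so $\omega$ ranges over finitely many paths carrying weights $P(\omega)$ that do not depend on $x$. Consequently $\mathbb{E}_\omega[W_K(x,\omega)] = \sum_\omega P(\omega) W_K(x,\omega)$, and the interchange of $\nabla_x$ with $\mathbb{E}_\omega$ collapses to termwise differentiation of a finite convex combination, which is valid wherever every summand is differentiable. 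The substantive work is therefore to show that each path map $x \mapsto W_K(x,\omega)$ is differentiable off a null set and carries a gradient bounded uniformly in $\omega$.

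First I would pin down boundedness of the iterates. Using the Fej\'er-monotonicity estimate established in Theorem~\ref{thm:appendix-skm-l2}, namely $\|w_{k+1}-w^*\|^2 \le \|w_k-w^*\|^2 - \delta(2-\delta)(r_{i^*}^+)^2/\|a_{i^*}\|^2$ for any feasible $w^*$, and invoking $0<\delta<2$ from Assumption~\ref{assump:step}, the distance to a fixed feasible point is nonincreasing, so $\sup_k \|w_k\| \le B$ for a finite $B$. Next I would differentiate the single-step map $F$. On the event that the $\arg\max$ is unique and the chosen constraint is strictly violated, $F$ is smooth with $\partial F/\partial w = I - \delta P_{a_{i^*}}$, where $P_a = aa^\top/\|a\|^2$; since $P_{a_{i^*}}$ is a rank-one orthogonal projector, the matrix $I-\delta P_{a_{i^*}}$ has spectrum $\{1-\delta,1,\dots,1\}$, so $\|\partial F/\partial w\|_2 = \max\{|1-\delta|,1\} = 1$ for $0<\delta<2$. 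The partials in $A$ and $b$ are bounded by combining $\|a_i\|\ge c$ (Assumption~\ref{assump:nondeg}), $\|A\|,\|b\|\le M$ (Assumption~\ref{assump:bound}) and $\|w\|\le B$, yielding constants $C_b = \delta/c$ and $C_A = \delta(B/c + 3(MB+M)/c^2)$.

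The main obstacle is controlling the set where $W_K(\cdot,\omega)$ fails to be differentiable. For fixed $\omega$ the map $x\mapsto W_k(x,\omega)$ is piecewise smooth, and the selected index can only change where either two sampled constraints tie, $(a_i-a_j)^\top W_k = b_i - b_j$, or a constraint crosses its activation boundary, $a_i^\top W_k = b_i$. Under Assumption~\ref{assump:tie} neither function vanishes identically on any open subset of $\mathbb{R}^{n_{\text{in}}}$, so each equation cuts out a hypersurface of Lebesgue measure zero. Because $|\mathcal{S}_k| = \beta$ is finite there are only finitely many pairs $(i,j)$, and $K$ together with the number of sampling paths is finite, so the union of all such bad sets over $(i,j)$, over $k$, and over $\omega$ remains a finite union of null sets and is therefore null. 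Off this full-measure complement every path map is differentiable, which is exactly the place where Assumption~\ref{assump:tie} is indispensable.

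Finally I would assemble the estimate. On the good set the chain rule gives $\nabla_x W_{k+1} = (\partial F/\partial w)\nabla_x W_k + (\partial F/\partial A)\nabla_x A + (\partial F/\partial b)\nabla_x b$; combining $\|\partial F/\partial w\|\le 1$, the constants $C_A,C_b$, and the Lipschitz bounds $\|\nabla_x A\|\le L_A$, $\|\nabla_x b\|\le L_b$ (Assumption~\ref{assump:diff}) produces the recurrence $\|\nabla_x W_{k+1}\| \le \|\nabla_x W_k\| + C$ with $C = C_A L_A + C_b L_b$, hence $\|\nabla_x W_K\| \le KC$. This uniform bound gives $\mathbb{E}[\|\nabla_x W_K\|^2] \le (KC)^2 < \infty$, which is claim (ii), and also serves as an integrable dominating function should one prefer a dominated-convergence formulation. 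For claims (i) and (iii), since $\mathbb{E}_\omega[W_K]$ is the finite $x$-independent-weighted sum of the path maps and each summand is differentiable on the shared full-measure set, the gradient passes through the sum, giving $\nabla_x \mathbb{E}_\omega[W_K] = \sum_\omega P(\omega)\nabla_x W_K = \mathbb{E}_\omega[\nabla_x W_K]$; this is simultaneously well-defined almost everywhere and unbiased, completing the argument.
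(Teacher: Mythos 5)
Your proposal is correct and follows essentially the same route as the paper's proof: the same Fej\'er-monotonicity boundedness lemma, the same single-step Jacobian bounds ($\|\partial F/\partial w\|_2 = 1$ for $0<\delta<2$, $C_b = \delta/c$, $C_A = \delta(B/c + 3(MB+M)/c^2)$), the same zero-measure argument for the tie/activation hypersurfaces under the non-degeneracy assumption, and the same recurrence yielding $\|\nabla_x W_K\| \le KC$ and hence finite variance. Your one refinement---observing that the sampling paths $\omega$ form a finite set with $x$-independent weights, so the gradient--expectation interchange reduces to termwise differentiation of a finite convex combination on the shared full-measure good set---is a cleaner justification than the paper's appeal to dominated convergence in part (i), and it matches the finite-sum argument the paper itself uses for part (iii).
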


\section{Hyperparameter Selection Analysis}

To provide practical guidance for T-SKM-Net deployment, we conduct systematic experiments analyzing the impact of key hyperparameters: step size $\delta$ and sampling size $\beta$. The experiments are performed on 5000-dimensional problems with mixed linear constraints to evaluate both computational efficiency and solution quality.

\subsection{Step Size $\delta$ Analysis}

Figure~\ref{fig:delta_analysis} presents the relationship between step size $\delta$ and algorithm performance across different T-SKM-Net variants. The results reveal a clear trade-off between computational efficiency and approximation quality.

\begin{figure}[htbp]
    \centering
    \includegraphics[width=0.8\textwidth]{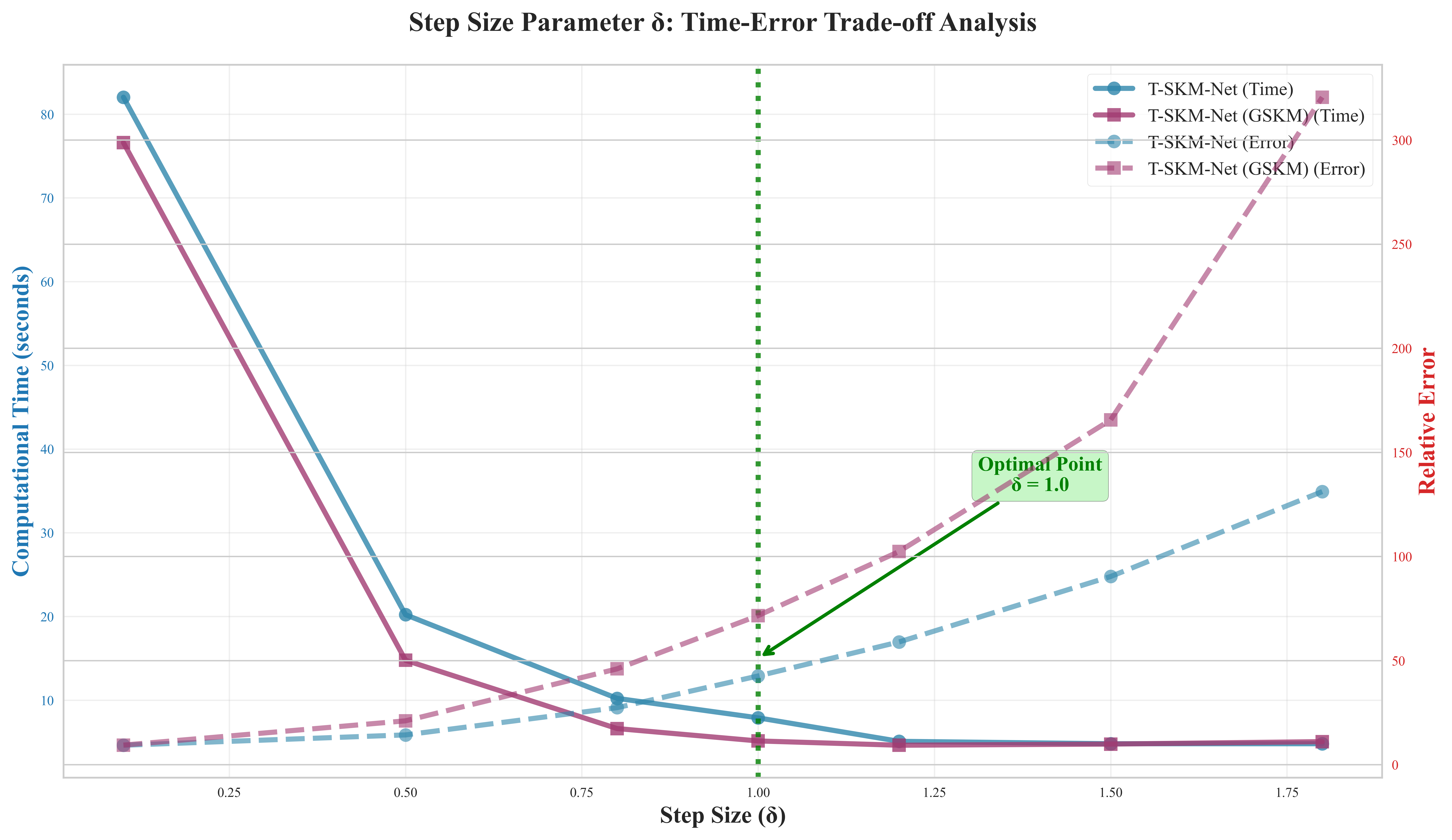}
    \caption{Step size $\delta$ analysis showing the trade-off between computational time (top) and approximation error (bottom). Results demonstrate that $\delta = 1.0$ represents an optimal balance point.}
    \label{fig:delta_analysis}
\end{figure}

The experimental results demonstrate that $\delta = 1.0$ serves as a critical \textbf{inflection point} in the performance characteristics:

\begin{itemize}
    \item \textbf{Computational Efficiency}: For $\delta < 1.0$, computational time decreases rapidly as $\delta$ increases. Beyond $\delta = 1.0$, further increases yield diminishing returns in speed improvement, with the rate of acceleration significantly reduced.
    
    \item \textbf{Approximation Quality}: The approximation error exhibits exponential growth for $\delta > 1.0$. While $\delta = 1.0$ maintains reasonable error levels ($\sim$42.6), increasing to $\delta = 1.2$ nearly doubles the error to $\sim$59.0, and $\delta = 1.8$ results in error exceeding 131.
    
    \item \textbf{Stability Across Variants}: This inflection point behavior is consistent across all T-SKM-Net algorithmic variants (GSKM, MSKM, NSKM), indicating the robustness of this empirical finding.
\end{itemize}

Based on these observations, we \textbf{recommend $\delta = 1.0$ as the default step size}, providing an optimal balance between computational efficiency and L2 projection approximation quality. This choice aligns with theoretical analysis suggesting that step sizes approaching the upper bound of the convergence condition ($\delta < 2$) may compromise solution quality.

\subsection{Sampling Size $\beta$ Analysis}

Figure~\ref{fig:beta_analysis} illustrates the effect of sampling size $\beta$ on computational performance. Unlike the step size analysis, the sampling size exhibits more complex behavior without a single optimal value.

\begin{figure}[htbp]
    \centering
    \includegraphics[width=0.8\textwidth]{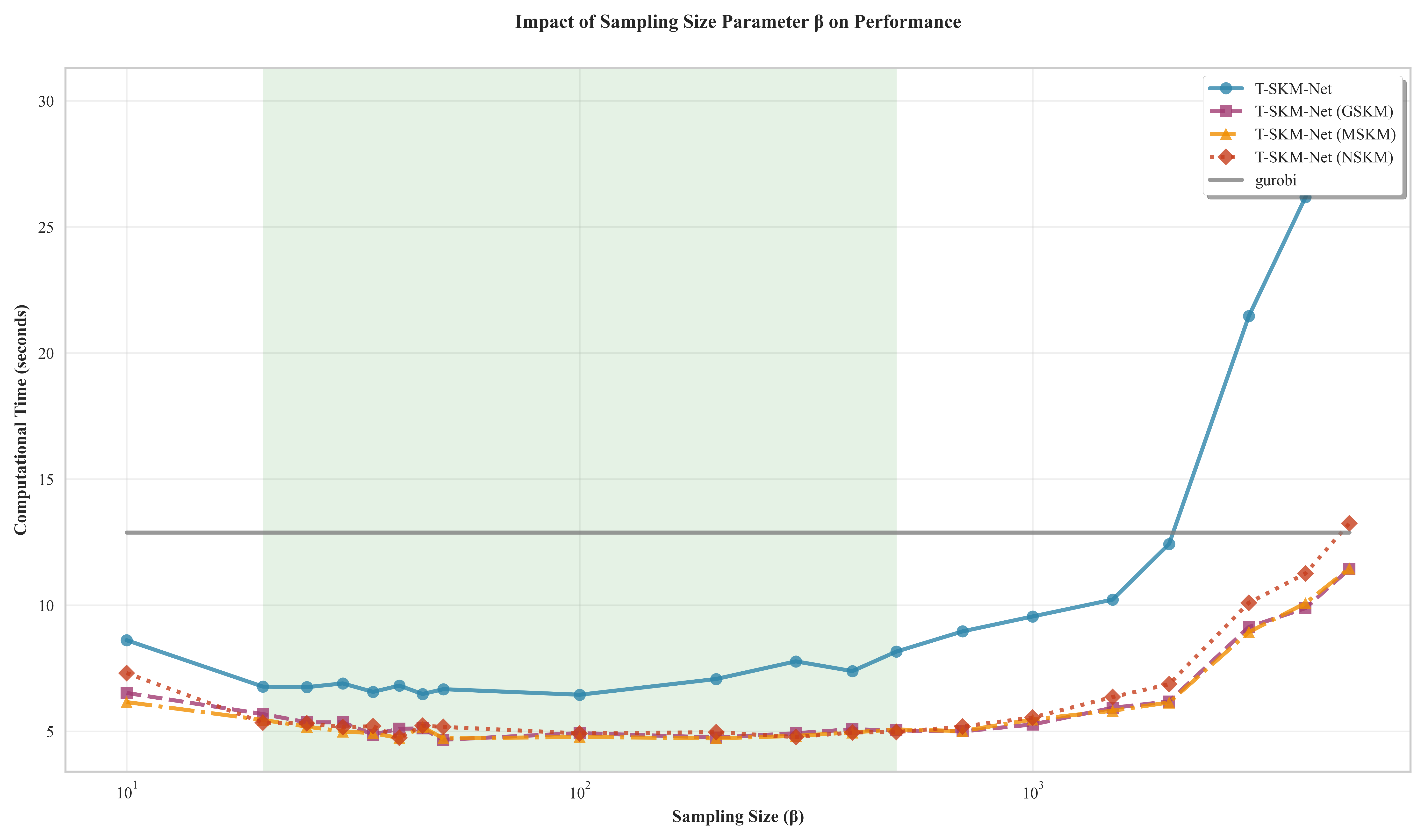}
    \caption{Sampling size $\beta$ analysis showing computational time across different T-SKM-Net variants. The results indicate that both extremely small and extremely large sampling sizes lead to increased computational cost.}
    \label{fig:beta_analysis}
\end{figure}

The experimental results reveal several important patterns:

\begin{itemize}
    \item \textbf{Small Sampling Size Effect}: When $\beta$ is very small (close to 0), computational time increases significantly. This occurs because insufficient sampling leads to poor constraint selection, requiring more iterations to achieve convergence.
    
    \item \textbf{Large Sampling Size Effect}: When $\beta$ approaches the total number of inequality constraints $m$, computational time also increases due to the overhead of processing larger constraint sets in each iteration.
    
    \item \textbf{Moderate Range Stability}: For moderate values of $\beta$ (approximately 25-500), computational time remains relatively stable, suggesting that the algorithm is robust within this range.
    
    \item \textbf{Variant-Dependent Behavior}: Different algorithmic variants (GSKM, MSKM, NSKM) show similar trends but with different absolute performance levels, indicating that momentum acceleration and other enhancements can improve efficiency regardless of sampling strategy.
\end{itemize}

While we cannot establish a definitive theoretical optimal value for $\beta$, the empirical evidence suggests \textbf{avoiding extreme values}. We recommend:

\begin{itemize}
    \item \textbf{Small-scale problems}: $\beta = \max(5, m/10)$
    \item \textbf{Large-scale problems}: $\beta = \max(10, O(\sqrt{m}))$
\end{itemize}

where $m$ is the number of inequality constraints. This strategy ensures sufficient constraint coverage while avoiding excessive computational overhead.

\subsection{Practical Implementation Guidelines}

Based on the comprehensive hyperparameter analysis, we provide the following implementation guidelines:

\begin{enumerate}
    \item \textbf{Default Configuration}: Use $\delta = 1.0$ and $\beta = \max(10, O(\sqrt{m}))$ as starting points for most applications.

    \item \textbf{Application-Specific Tuning}: For applications requiring higher precision, consider slightly reducing $\delta$ to 0.8-0.9 at the cost of increased computational time.

\end{enumerate}

These empirical guidelines provide practitioners with concrete parameter settings that balance computational efficiency and solution quality across diverse application scenarios.

\section{Ablation Studies}

To provide comprehensive validation of the T-SKM-Net framework components, we conduct extensive ablation studies examining the impact of nullspace transformation, sampling strategies, and algorithmic variants. Note that these experiments utilize a different problem generator than the main paper experiments to ensure broader experimental coverage - we provide both code implementations for reproducibility.

Due to space limitations, the main paper only briefly mentioned ablation studies. Here we present a complete ablation analysis to provide thorough component validation and practical implementation guidance for researchers and practitioners. The experiments cover different dimensional ranges depending on the specific analysis: nullspace transformation comparison (10-3,000 dimensions), sampling strategy analysis (10-12,000 dimensions), and algorithmic variants comparison (10-12,000 dimensions).

\subsection{Impact of Nullspace Transformation}

Figure~\ref{fig:ablation_null_space} demonstrates the critical importance of nullspace transformation in handling mixed constraint systems. The comparison between T-SKM-Net (with nullspace transformation) and the naive approach (without nullspace transformation) reveals substantial performance differences.

\begin{figure}[htb]
    \centering
    \includegraphics[width=0.9\textwidth]{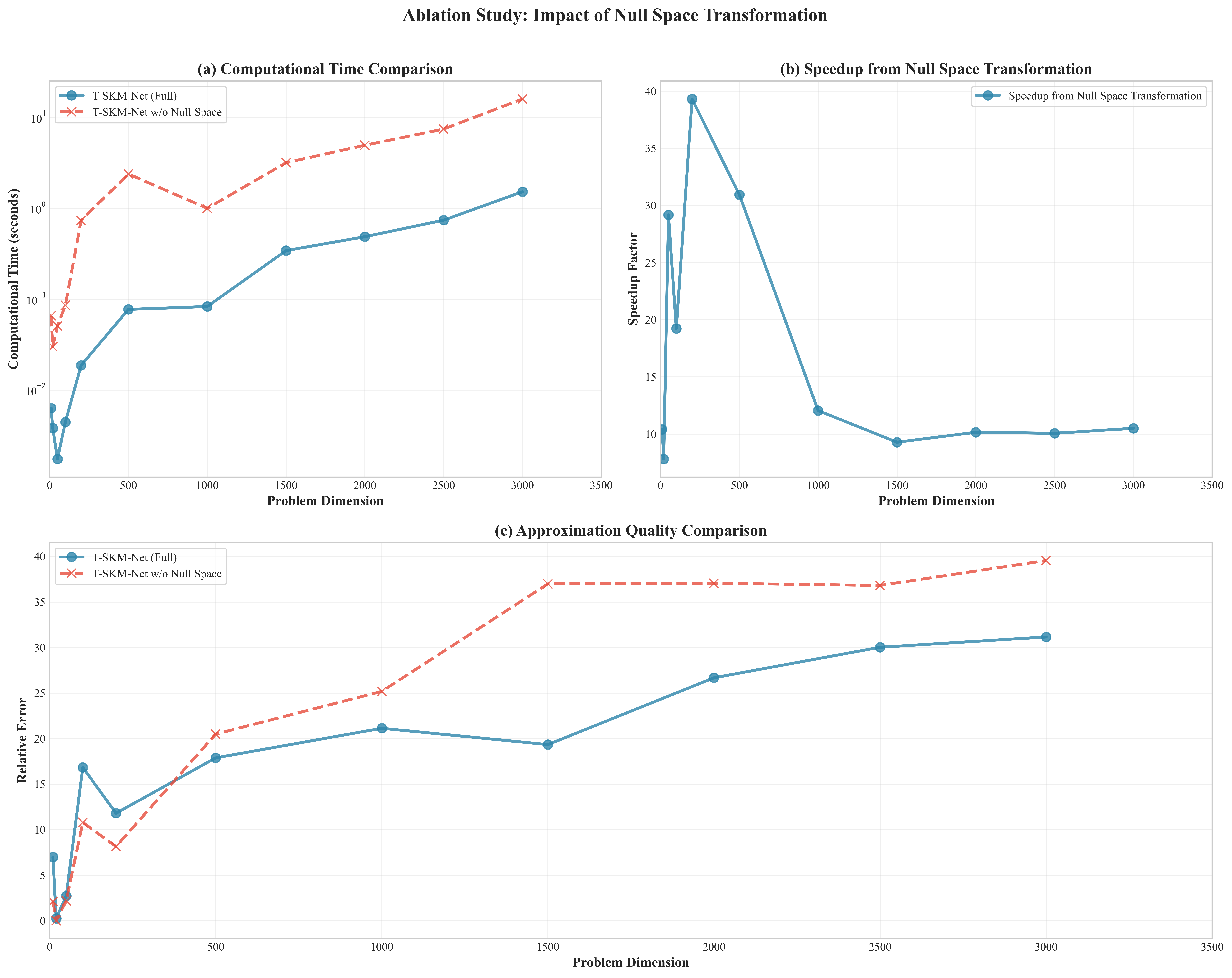}
    \caption{Ablation study showing the impact of nullspace transformation. Panel (a) shows computational time comparison, panel (b) demonstrates the speedup achieved through nullspace transformation, and panel (c) compares the approximation quality of both approaches.}
    \label{fig:ablation_null_space}
\end{figure}

The experimental results confirm our theoretical analysis from Section 4.1 regarding the geometric mismatch between equality and inequality constraints:

\begin{itemize}
    \item \textbf{Computational Efficiency}: The nullspace transformation provides substantial speedups across all tested dimensions. At dimension 3000, T-SKM-Net achieves approximately 10.5$\times$ acceleration compared to the naive approach (1.52s vs 15.97s).
    
    \item \textbf{Consistent Improvement}: The speedup factor remains stable across different problem scales, consistently delivering 10-20$\times$ acceleration. For example, at dimension 1000, the acceleration is approximately 12$\times$ (0.083s vs 0.999s), and at dimension 2000, it achieves 10.1$\times$ speedup (0.487s vs 4.94s).
    
    \item \textbf{Solution Quality Enhancement}: As shown in panel (c), nullspace transformation significantly improves L2 projection approximation quality. The relative error remains consistently lower for T-SKM-Net across all dimensions, with the naive approach showing substantially higher approximation errors, particularly for larger problems.
\end{itemize}

This ablation study validates our framework design choice and demonstrates that nullspace transformation provides consistent and substantial improvements in both computational efficiency and solution quality for practical constraint satisfaction problems.

\subsection{Sampling Strategy Analysis}

Figure~\ref{fig:ablation_sampling} compares two sampling strategies: sampling with replacement versus sampling without replacement. This comparison provides \textbf{practical implementation guidance} for developers working with the T-SKM-Net framework.

\begin{figure}[htbp]
    \centering
    \includegraphics[width=0.7\textwidth]{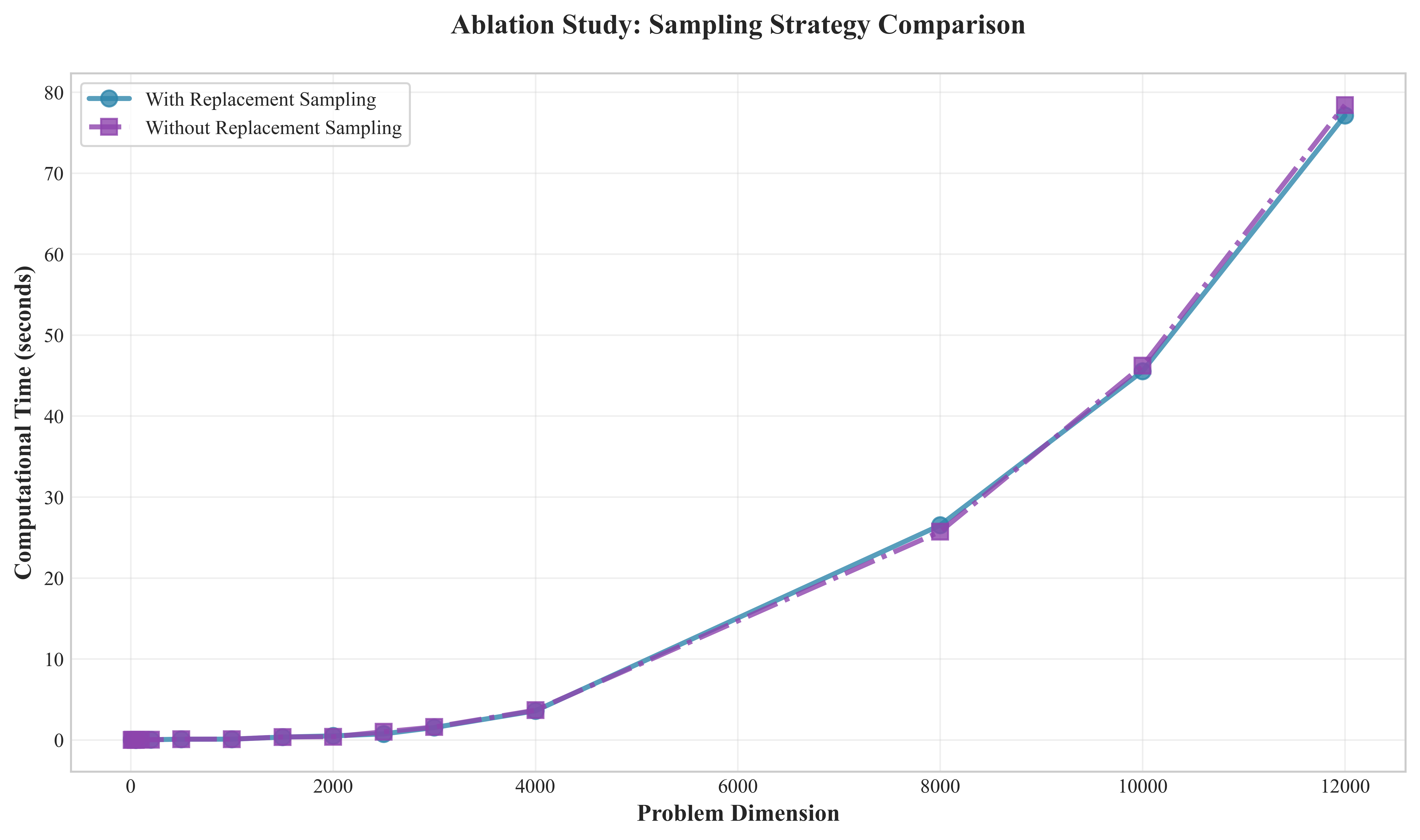}
    \caption{Comparison of sampling strategies showing computational time. Both strategies achieve comparable performance, validating the use of simpler with-replacement sampling in practical implementations.}
    \label{fig:ablation_sampling}
\end{figure}

The results provide important insights for practical implementation:

\begin{itemize}
    \item \textbf{Similar Performance}: Both strategies exhibit nearly identical computational time scaling across different problem dimensions, with minimal performance differences.
    
    \item \textbf{Implementation Simplicity}: For batch processing, sampling with replacement can be implemented with a single line: \texttt{torch.randint(0, p, (batch\_size, beta))}, directly creating batch-dimensional indices. In contrast, sampling without replacement requires iterating over each batch element with \texttt{torch.randperm()}, making the implementation more complex and less GPU-friendly.
    
    \item \textbf{Practical Recommendation}: Given comparable performance and significantly simpler batch implementation, developers can confidently use with-replacement sampling without sacrificing algorithmic effectiveness while maintaining clean, efficient code.
\end{itemize}

This analysis demonstrates that the implementation choice between sampling strategies has minimal impact on performance, allowing practitioners to prioritize code simplicity and maintainability.

\subsection{Algorithmic Variants Comparison}

As mentioned in Section 4.5 of the main paper, the T-SKM-Net framework supports various SKM algorithm variants, including momentum-enhanced versions. To validate this flexibility and guide algorithm selection, we compare several momentum-enhanced SKM variants against traditional optimization approaches within the nullspace transformation framework.

Figure~\ref{fig:ablation_variants} presents a comprehensive comparison of different algorithmic approaches, contrasting our SKM-based methods against traditional optimization solvers.

\begin{figure}[htbp]
    \centering
    \includegraphics[width=0.9\textwidth]{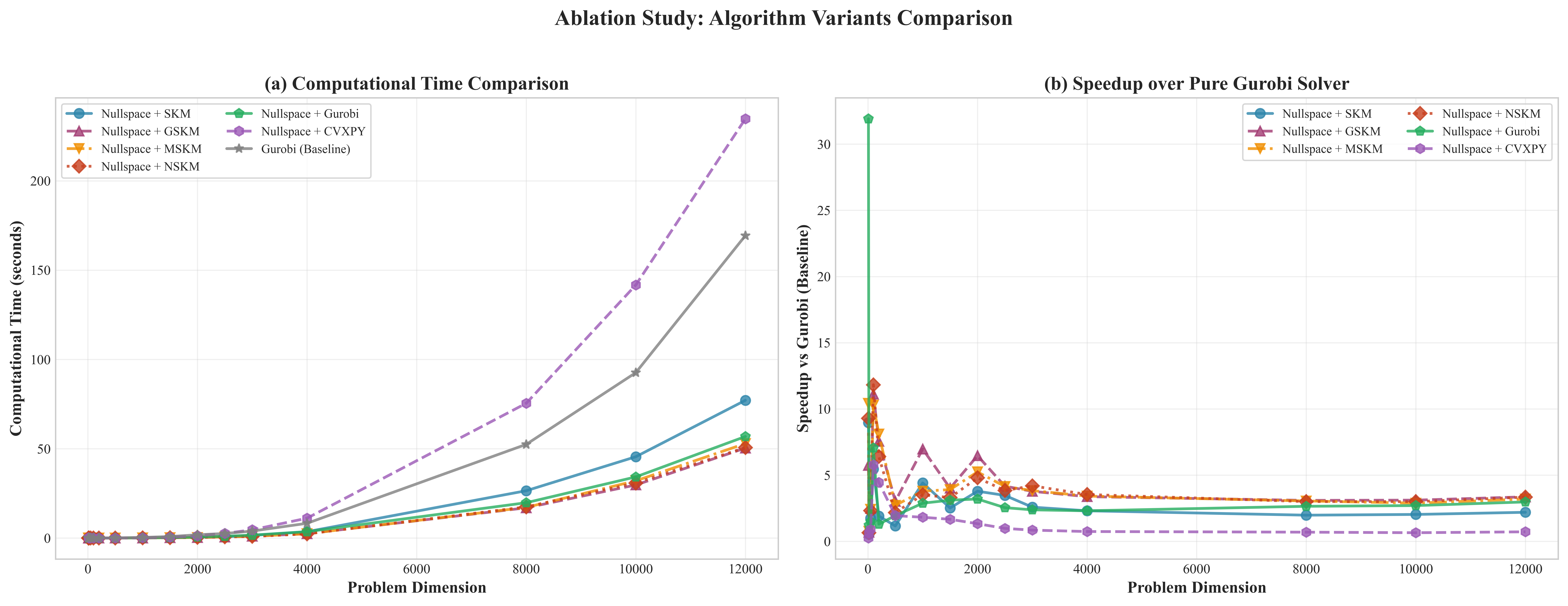}
    \caption{Comparison of algorithmic variants showing computational time (left) and speedup over pure Gurobi solver (right). Momentum-enhanced SKM variants achieve superior performance compared to traditional optimization methods within the nullspace framework.}
    \label{fig:ablation_variants}
\end{figure}

The tested variants include:
\begin{itemize}
    \item \textbf{Basic T-SKM-Net}: Standard SKM implementation as described in the main paper
    \item \textbf{GSKM}: Uses weighted averaging of current and previous iterations: $x = (1-\xi) z_{\text{curr}} + \xi z_{\text{prev}}$ with $\xi = -0.25$, providing stabilized convergence through iteration combination
    \item \textbf{NSKM}: Incorporates Nesterov momentum acceleration with momentum parameter $\mu = 0.25$, using look-ahead gradients for improved convergence rates
    \item \textbf{MSKM}: Employs classical heavy-ball momentum (also known as polyak momentum) with $\mu = 0.25$, maintaining velocity from previous iterations to accelerate convergence
    \item \textbf{Traditional Solver Integration}: Nullspace transformation combined with conventional optimization solvers (Gurobi and CVXPY)
\end{itemize}

The comparison reveals important insights for T-SKM-Net algorithm selection:

\begin{itemize}
    \item \textbf{Momentum Acceleration Benefits}: All momentum-enhanced T-SKM-Net variants (GSKM, NSKM, MSKM) consistently outperform the basic T-SKM-Net implementation, demonstrating the effectiveness of momentum acceleration in iterative constraint satisfaction.
    
    \item \textbf{Superior Performance Over Open-Source Solvers}: Our proposed T-SKM-Net methods significantly outperform nullspace transformation combined with open-source solvers. For instance, "Nullspace + CVXPY" performs substantially worse than even the baseline, highlighting the advantages of the iterative SKM approach over traditional optimization within the constraint satisfaction framework.
    
    \item \textbf{Competitive with Commercial Solvers}: Remarkably, momentum-enhanced T-SKM-Net variants outperform even commercial solver combinations. At 12,000 dimensions, the momentum-enhanced variants (50.4-52.9s) surpass "Nullspace + Gurobi" (56.9s), demonstrating that our iterative approach can compete with state-of-the-art commercial optimization software.
    
    \item \textbf{Scalability Advantages}: The performance advantage of T-SKM-Net becomes more pronounced with increasing problem dimension, demonstrating superior scalability for large-scale constraint satisfaction applications.
    
    \item \textbf{Momentum Enhancement Effectiveness}: Among all tested approaches, momentum-enhanced T-SKM-Net variants consistently achieve the best performance, validating the importance of momentum acceleration in the SKM framework.
\end{itemize}

These results validate that T-SKM-Net, particularly with momentum enhancement, provides a highly effective approach for neural network constraint satisfaction, outperforming both open-source optimization methods and competing favorably with commercial solvers while offering superior scalability for large-scale applications.

\ifreproStandalone
\end{document}
\fi

\end{document}